\DeclareMathOperator\supp{supp}
\newcommand{\norm}[1]{\left\lVert#1\right\rVert}
\useunder{\uline}{\ul}{}
\newcommand{\errval}[2]{#1 \scriptsize{$\pm$ #2}}
\def\eqref#1{equation~\ref{#1}}
\def\1{\bm{1}}
\DeclareMathAlphabet{\mathsfit}{\encodingdefault}{\sfdefault}{m}{sl}
\SetMathAlphabet{\mathsfit}{bold}{\encodingdefault}{\sfdefault}{bx}{n}
\newtheorem{theorem}{Theorem}[section]
\newtheorem{lemma}[theorem]{Lemma}
\theoremstyle{definition}
\title{Noise-conditioned Energy-based Annealed Rewards (NEAR): A Generative Framework for Imitation Learning from Observation}
\author{Anish Abhijit Diwan$^{1}$\thanks{Corresponding author: Anish Abhijit Diwan (\url{anishhdiwan@gmail.com})}~~, 
        Julen Urain$^{2,5}$~,
        Jens Kober$^{1}$\textsuperscript{†}~,
        Jan Peters$^{2,3,4,5}$\textsuperscript{†}\\
$^1$Department of Cognitive Robotics, TU Delft, Netherlands \\
$^2$Department of Computer Science, TU Darmstadt, Germany \\
$^3$Hessian Center for Artificial Intelligence (Hessian.ai), Germany \\
$^4$Center for Cognitive Science, TU Darmstadt, Germany \\
$^5$German Research Center for AI (DFKI) \\
\textsuperscript{†}Equal supervision 
}
\begin{document}

\maketitle
\begin{abstract}
This paper introduces a new imitation learning framework based on energy-based generative models capable of learning complex, physics-dependent, robot motion policies through state-only expert motion trajectories. Our algorithm, called Noise-conditioned Energy-based Annealed Rewards (NEAR), constructs several perturbed versions of the expert's motion data distribution and learns smooth, and well-defined representations of the data distribution's energy function using denoising score matching. We propose to use these learnt energy functions as reward functions to learn imitation policies via reinforcement learning. We also present a strategy to gradually switch between the learnt energy functions, ensuring that the learnt rewards are always well-defined in the manifold of policy-generated samples. We evaluate our algorithm on complex humanoid tasks such as locomotion and martial arts and compare it with state-only adversarial imitation learning algorithms like Adversarial Motion Priors (AMP). Our framework sidesteps the optimisation challenges of adversarial imitation learning techniques and produces results comparable to AMP in several quantitative metrics across multiple imitation settings. Code and videos available at \href{https://anishhdiwan.github.io/noise-conditioned-energy-based-annealed-rewards/}{anishhdiwan.github.io/noise-conditioned-energy-based-annealed-rewards/}
\end{abstract}

\section{Introduction}
Learning skills through imitation is probably the most cardinal form of learning for human beings. Whether it is a child learning to tie their shoelaces, a dancer learning a new pose, or a gymnast learning a fast and complex manoeuvre, acquiring new motor skills for humans typically involves guidance from another skilled human in the form of demonstrations. Acquiring skills from these demonstrations typically boils down to interpreting the individual features of the demonstration motion -- for example, the relative positions of the limbs in a dance pose -- and subsequently attempting to recreate the same features via repeated trial and error. Imitation learning (IL) is an algorithmic interpretation of this simple strategy of learning skills by matching the features of one's own motions with the features of the expert's demonstrations. 

Such a problem can be solved by various means, with techniques like \emph{behavioural cloning (BC)}, \emph{inverse reinforcement learning (IRL)}, and their variants being popular choices \citep{osa2018algorithmic}. The imitation learning problem can also be formulated in various subtly differing ways, leading to different constraints on the types of algorithms that solve the problem. One notably challenging version of the problem is \textit{Imitation from Observation} (IfO) \citep{torabi2018generative, torabi2019recent, zare2024survey}, where the expert trajectories are only comprised of state features and no information about the expert's actions is available to the imitator. This means that learning a policy is not as straightforward as capturing the distribution of the expert's state-action pairs. Instead, the imitator must also learn to capture the dynamics of its environment. From a practical perspective, the IfO problem is quite relevant as obtaining action-rich data for real-world tasks -- across several agent embodiments and at large scales -- is rather challenging. In most tasks, the expert only has an implicit representation of the policy. Imagine how a dancer cannot realistically convey their low-level actions -- like muscle activations or positional targets -- in a dance routine. Further, collecting action-rich data by teleoperating the agent requires significant human effort and often offers limited motion complexity. Imitation from observation hence closely depicts the data-limited reality of applying IL in the real world. Unfortunately, because BC relies on the expert's actions, a large fraction of BC techniques (including state-of-the-art diffusion-based algorithms like \citep{chi2023diffusion}) are inapplicable to the problem of imitating from observation. Inverse reinforcement learning, on the other hand, can still be applied to such problems. 

In this work, we mainly focus on observation-based inverse reinforcement learning, where the agent recovers a scalar reward signal from the demonstrations that when maximised by updating the agent's policy, provides the agent with the ``correct'' motivation to imitate the expert. While reward learning in itself is a broad field of study, recent works that leverage generative adversarial techniques for this task have shown markedly good results \citep{tessler2023calm, peng2021amp, ho2016generative, torabi2018generative}. The fundamental idea in adversarial imitation learning (AIL) is to simultaneously learn and optimise the return from the reward function implied in the expert demonstrations through an optimisation objective derived from \textit{Generative Adversarial Networks} (GANs) \citep{goodfellow2014generative}. This GAN-inspired min-max optimisation procedure considers the agent's closed-loop policy as a generator and simultaneously trains a discriminator to differentiate between the motions in the expert data distribution and the motions produced by the agent's policy. The discriminator aims to correctly label samples from both distributions while the generator aims to return an action that when applied to the environment, leads to features that resemble those in the expert's data distribution. The discriminator's prediction is used as a reward signal in reinforcement learning (RL) and the policy and the discriminator are updated iteratively until convergence. Although methods like AMP \citep{peng2021amp}, GAIL \citep{ho2016generative}, GAIfO \citep{torabi2018generative} (purely adversarial IL), and DiffAIL \citep{wang2024diffail}, DIFO \citep{huang2024diffusion} (diffusion enhanced adversarial IL) have achieved impressive results in a wide variety of imitation tasks, they are prone to challenges intrinsic to their theoretical formulation. The simultaneous min-max optimisation used to learn the reward function in these techniques is highly sensitive to hyperparameter values. This causes adversarial learning techniques to have unstable training dynamics and learn non-smooth probability densities \citep{saxena2021generative, kodali2017convergence, arjovsky2017towards, goodfellow2014generative}. These limitations ultimately lead to instability and poor reinforcement learning when using adversarial techniques to learn reward functions.

This paper explores a non-adversarial generative framework for reward learning that completely sidesteps the limitations of previous generative imitation learning techniques. Our primary contribution is to use energy-based generative models as the backbone of the reward learning framework to learn smooth and accurate representations of the data distribution. We propose to use the learnt energy functions as reward functions and present a new imitation learning algorithm called \emph{Noise-conditioned Energy-based Annealed Rewards (NEAR)} that has better, \textbf{more stable learning dynamics, and learns smooth and unambiguous reward signals.} NEAR produces motions comparable to state-of-the-art adversarial imitation learning methods like AMP \cite{peng2021amp}. Before diving into our proposed framework (\Cref{section:near,section:experiments}), we first briefly discuss the challenges of adversarial IL in \Cref{section:ail_background} and score-based generative modelling in \Cref{section:score_based_models}.

\section{Background: Adversarial Imitation Learning}
\label{section:ail_background}

Given an expert motion dataset $\mathcal{M}$ containing i.i.d.\ data samples $x \equiv (s, s') \in X$ implying a distribution $p_D$, where $X$ is the space of state transitions \footnote{In this paper we define all expressions for the partially observable case, however, the same results also apply to the fully observable cases.}, adversarial IL methods aim to learn a differentiable generator (policy) $\pi_{\theta_G}(s): S \rightarrow A$ where $S$ is the state space, $A$ is the action space, and $s \in S$ is a sample drawn from the occupancy measure of the policy $\rho_{\pi}$. Similarly to a standard GAN, the idea here is to learn a differentiable discriminator $D_{\theta_D}(x): X \rightarrow [0,1]$ that returns a scalar value representing the probability that the sample $x$ was derived from $p_D$. However, now there exists an additional function $W(\pi_{\theta_G}(s)): A \rightarrow X$ that maps the output of the policy to the discriminator's input space. The discriminator is learnt assuming that $W(\pi_{\theta_G}(s))$ is an i.i.d.\ sample in $X$ and the generator is learnt via policy gradient methods by using $\log D_{\theta_D}(W(\pi_{\theta_G}(s)))$ as a reward function \citep{peng2021amp, torabi2018generative}. The AIL optimisation procedure is as follows ($J$ is the performance measure as per the policy gradient theorem \cite{sutton1999policy}). 

\begin{align}
& \min_{\theta_D} \mathbb{E}_{x \sim p_D} [\log D_{\theta_D}(x)] + \mathbb{E}_{s \sim \rho_{\pi_{\theta_G}}} [\log (1 - D_{\theta_D}(W(\pi_{\theta_G}(s)))] \label{disc_loss} \\
\max_{\theta_G} J \text{ where } &\nabla_{\theta_G} J(\pi_{\theta_G}) = \mathbb{E}_{\pi_{\theta_G}} [Q^{\pi_{\theta_G}}(s,a) \nabla_{\theta_G} \log \pi_{\theta_G}(s,a)] \nonumber \\
& \text{where } Q^{\pi_{\theta_G}}(s,a) = \mathbb{E}_{\pi_{\theta_G}} [\log D_{\theta_D}(W(\pi_{\theta_G}(s)))] \nonumber
\end{align} 

\begin{figure}[t!]
    \centering
    \makebox[\textwidth][c]{
        \begin{minipage}{0.8\textwidth}
            \centering
            \subfloat{\includegraphics[width=\textwidth]{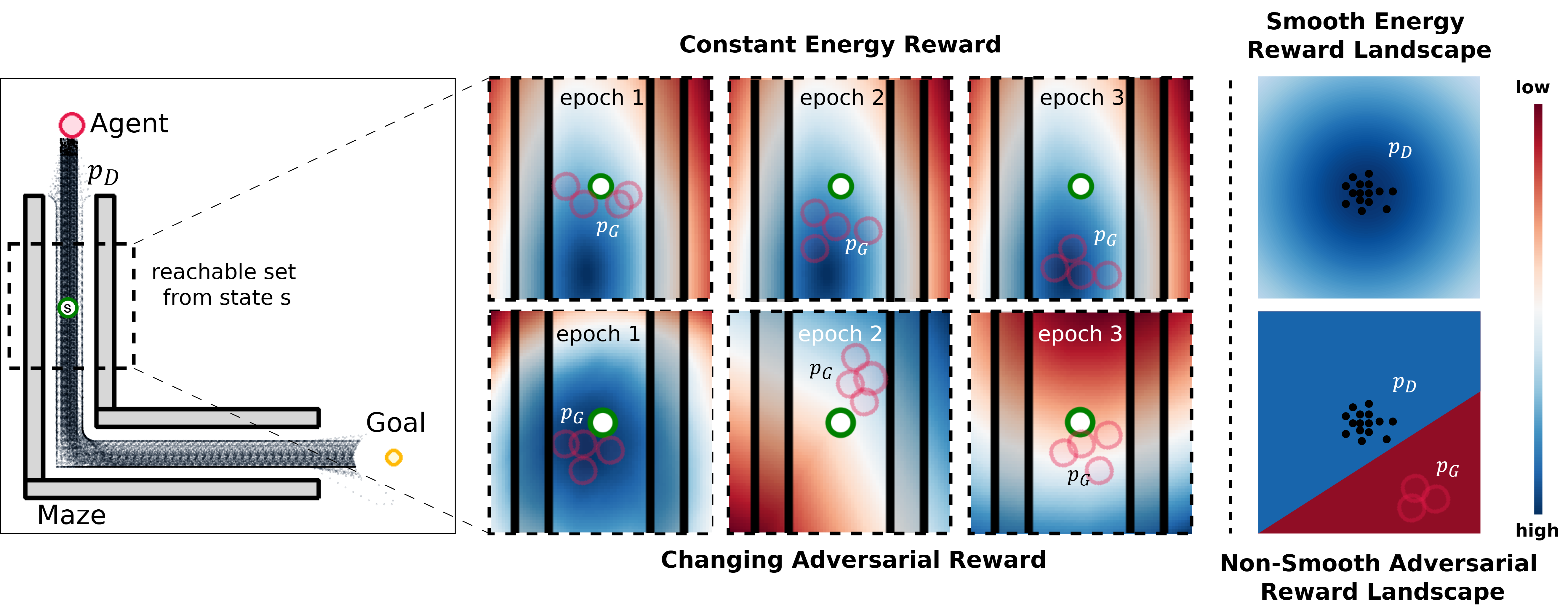}}
        \end{minipage}
    }

    \caption{A comparison of reward functions (probability density approximations) learnt in a 2D target-reaching imitation task (left). In this task, an agent aims to reach a goal and expert demonstrations ($p_D$) pass through an L-shaped maze. The learnt reward function is expected to encourage the agent to pass through the maze. In the middle, we show $\texttt{rew}(s'|s)$ for all reachable states around a state $s$ (green circle) at different training epochs. On the right, we show an illustration of the non-smooth reward landscape of adversarial IL. The energy-based reward is a smooth (with continuous gradients), accurate representation of $p_D$ and is constant regardless of the distribution of policy-generated motions ($p_G$). In contrast, the adversarial reward is non-smooth and prone to instability. Additionally, it changes depending on $p_G$ (the discriminator tends to minimise policy predictions) and can provide non-stationary reward signals (additional details in \Cref{appendix:maze_domain}).}
    \label{fig:illustration}
\end{figure}

\begin{figure}[t!]
    \centering
    \makebox[\textwidth][c]{
        \begin{minipage}{0.99\textwidth}
            \centering
            \subfloat{\includegraphics[width=\textwidth]{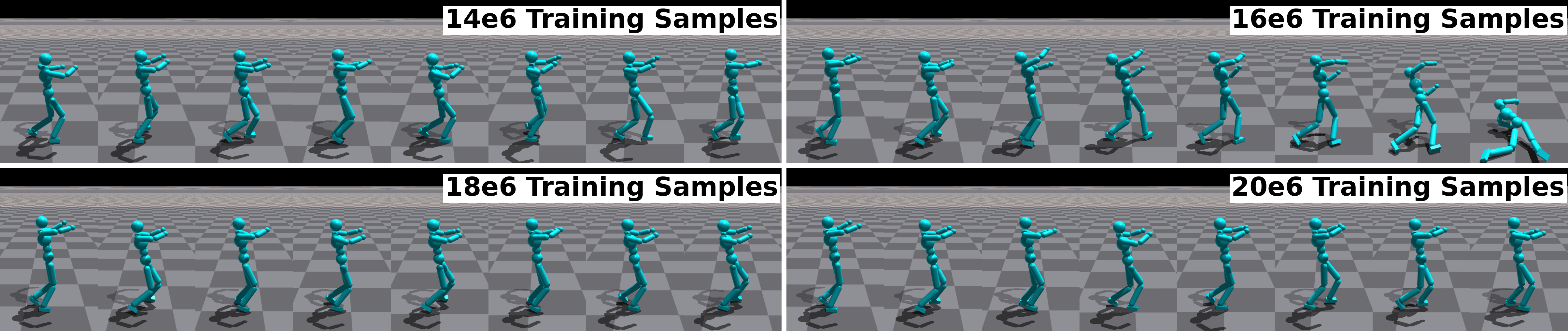}}
        \end{minipage}

    }
    \caption{Degradation of an adversarially learnt policy (AMP) in a stylised walking imitation task. With sufficient training, the policy does learn to complete the task, however, performance fluctuates substantially throughout training (with degradation seen at $16e6$ training samples).}
    \label{fig:unstable_disc}
\end{figure}

Similarly to standard sample-generation-focused GANs, AIL algorithms also suffer from having a perfect discriminator \citep{arjovsky2017towards}. This leads to low discriminator predictions on the policy-generated samples, causing zero or constant rewards for the RL policy and ultimately leading to poor training dynamics. Moreover, the iterative nature of the AIL problem leads to a constantly changing manifold of policy-generated samples and an arbitrarily changing discriminator decision boundary. This causes unpredictability in the rewards and introduces drastic non-stationarity in the RL problem. Lastly, the rewards learnt via adversarial techniques are non-smooth and do not always provide an unambiguous signal for improvement. Here, smoothness refers to the ability of a reward function to convey informative gradients in the sample space \footnote{While the gradient of the reward function is not a direct part of the policy update in policy gradient methods, a smooth reward function is necessary for ``sensible'' policy updates.}. These challenges compound to cause fluctuations in the learnt policy and convergence to local minima (elaborated in \Cref{appendix:ail_failure_modes}). 

To conclude this section, we point to \Cref{fig:illustration,fig:unstable_disc} that show a visual example of the non-smooth nature of the discriminator and qualitative results demonstrating instability of an adversarially learnt policy. Having discussed several issues with adversarial IL, the next section introduces an alternative method of learning the expert's data distribution using which we subsequently propose a new imitation learning algorithm. 

\section{Noise-Conditioned Score Networks (NCSN)}
\label{section:score_based_models}
Score-based generative models are a family of techniques recently popularised for generating realistic images and video samples. They model the unknown data distribution as a Boltzmann distribution and generate samples by an iterative denoising process by traversing along the gradient of the data distribution's log probability \citep{song2021train}. Score-based models approximate the gradient of the log probability (called the score function) through a procedure called denoising score matching \citep{vincent2011connection}. Similarly to GANs, the aim here is to learn a probability distribution $p_G$ that closely resembles the data distribution $p_D$, with $p_G \triangleq \sfrac{e^{-E(x)}}{Z}$ (Boltzmann distribution) and $E(x)$ called the energy function of the distribution. Intuitively, the energy function is a measure of the closeness of a sample to $p_D$ while the score ($\nabla_x \log p_G(x)$) is a vector pointing towards the steepest increase in the likelihood of $p_D$. Learning the score function implicitly also learns the energy function as $\nabla_x \log p_G(x) = \nabla_x \log (\sfrac{e^{-E(x)}}{Z}) = - \nabla_x E(x)$. In this paper, we propose to explicitly learn the energy function to then use the energy of a sample to guide reinforcement learning. To do so, we make modifications to a score-based framework called Noise Conditioned Score Networks (NCSN) \citep{song2019generative, song2020improved}.

The underlying idea of NCSN is to learn the score function by a process of artificial perturbation and denoising. Data samples are first perturbed by adding variable amounts of Gaussian noise. The score function is then learnt by estimating the denoising vectors that point from the perturbed data samples to the original ones. Given i.i.d.\ data samples $\{x \sim p_D \in \mathbb{R}^D\}$, NCSN \citep{song2019generative} formulates a perturbation process that adds Gaussian noise $\mathcal{N}(x,\sigma)$ to each sample $x$, where $\sigma$ is the standard deviation representing a diagonal covariance matrix and is sampled uniformly from a geometric sequence $\{\sigma_1, \sigma_2, ..., \sigma_L\}$. Following this perturbation process, we obtain a conditional distribution $q_{\sigma}(x'|x) = \mathcal{N}(x' | x, \sigma I)$ from which a marginal distribution $q_{\sigma}(x')$ can be obtained as $\int q_{\sigma}(x'|x) p_D(x) dx$. Given this perturbed marginal distribution, NCSN attempts to learn a score function $s_{\theta}(x, \sigma): \mathbb{R}^D \rightarrow \mathbb{R}^D$ that points from the perturbed samples back to the original ones. The idea is to learn a conditional function to jointly estimate the scores of all perturbed data distributions, i.e., $\forall \sigma \in \{\sigma_i\}_{i=1}^L: s_{\theta}(x', \sigma) \approx \nabla_{x'} \log q_{\sigma}(x')$. The score network is learnt via denoising score matching (DSM) \citep{vincent2011connection} on samples drawn from the conditional distribution $q_{\sigma}(x' | x)$ \footnote{We leverage the fact that $\mathcal{L}_{\text{DSM}}(q_{\sigma}(x')) = \mathcal{L}_{\text{DSM}}(q_{\sigma}(x' | x)) + \text{const}.$ \citep{song2021train, vincent2011connection}.}. The final DSM loss is averaged over the various $\sigma$ values assigned to data samples in the training batch.

By perturbing individual data samples with Gaussian noise, NCSN essentially creates a perturbed distribution that is a smooth and dilated version of $p_D$ (illustrated in \Cref{fig:illustration}) -- with the standard deviation $\sigma$ controlling the level of dilation. This perturbation strategy ensures that $p_D$ is supported in the whole sample space and not just a low-dimensional manifold in $\mathbb{R}^D$ (manifold hypothesis \citep{fefferman2016testing, cayton2008algorithms}), ensuring well-defined gradients and allowing a better score function approximation. It also ensures that the score function is accurately approximated in data-sparse regions in $p_D$ by increasing sample density in such regions \citep{song2019generative}.

\section{Noise-conditioned Energy-based Annealed Rewards (NEAR)}
\label{section:near}

In NCSN, the perturbed conditional distribution $q(x' | x)$ is formulated as a Boltzmann distribution such that $q(x' | x) = \sfrac{e^{\texttt{DIST}(x', x)}}{Z}$ where $\texttt{DIST}()$ is a function that defines some distance measure between a sample and its perturbed form. In our case, $\texttt{DIST}_{\sigma}()$ is the energy function of a Gaussian distribution with $\sigma$ standard deviation and is a smooth, dilated representation of the energy landscape of the expert data distribution $p_D$. Our approach leverages the fact that for a sample $x \in X$, $\texttt{DIST}_{\sigma}(x)$ is essentially just a scalar-valued measure of the closeness of $x$ to $p_D$, meaning that it can be used as a reward signal to guide a policy to generate motions that gradually resemble those in $p_D$. 

This approach sidesteps several of the shortcomings of adversarially learnt rewards. Since $\texttt{DIST}_{\sigma}()$ is learnt via score matching on samples arbitrarily far away from the $p_D$, it is both well-defined and continuous in the relevant parts of the sample space $X$. Further, since $\texttt{DIST}_{\sigma}()$ is a dilated version of the energy function of $p_D$, it is also not prone to being constant valued. Continuity and informativeness in the whole space indeed require an infinitely large $\sigma$, however, realistically a $\sigma$ that is sufficiently large to cover the worst-case policy would guarantee that the rewards are both smooth and non-constant in the parts of $X$ where the policy-generated samples are realistically expected to lie (proof: \Cref{appendix:smoothness_proof}). Moreover, $\texttt{DIST}_{\sigma}()$ is learnt using perturbed samples from $p_D$ and hence does not rely on the policy-generated samples. This means that it is disconnected from the policy and is not prone to issues of high variance that come with simultaneous training. Finally, we propose to train NCSN \textit{before} training the reinforcement learning policy, eliminating any concerns of non-stationarity. The following sections discuss the procedure to learn these energy functions and other algorithmic details of our approach (\Cref{alg:near}).

        \RestyleAlgo{ruled}
        \begin{algorithm}[t!]
            \caption{Noise-conditioned Energy-based Annealed Rewards}\label{alg:near}
            \KwData{$\mathcal{M} \equiv \{ (s, s') \}$, $\{ \sigma_i \}_{i=1}^{L}$}
            Initialise energy network $e_{\theta}$, policy $\pi_{\theta_G}$, and rollout horizon \tcp*{\Cref{appendix:rl_details}}
            Initialise replay buffer $\mathcal{B} \gets \emptyset$, annealing buffer $\mathcal{A} \gets \emptyset$, and annealing threshold $\alpha$
            
            \SetKwFunction{FMain}{EnergyNCSN}
            \SetKwProg{Fn}{Subroutine}{:}{}
            \Fn{\FMain{}}{
                \While{not done}{
                    $b^\mathcal{M} \gets$ sample a batch of transitions from $\mathcal{M}$ \;
                    $b^{sigma} \gets$ sample a batch of noise levels $\sigma_k$ uniformly from $\{ \sigma_i \}_{i=1}^{L}$ \;
                    Update $e_{\theta}$ according to \Cref{ncsn_final_loss} using pairing $b^\mathcal{M} : b^{sigma}$
                }
            }
            
            \SetKwFunction{FMainRL}{RL}
            \SetKwProg{Fn}{Subroutine}{:}{}
            \Fn{\FMainRL{}}{
                Initialise $\sigma_k = \sigma_1$ \\
                \While{not done}{
                    \For{$i = 0,1,\cdots$}{
                        $\tau_i \gets \{ [s, a, s', r'=\texttt{rew-tf}(e_{\theta}(s, s', \sigma_k))]^{\text{till horizon}} \}_{\pi_{\theta_G}}$ \tcp*{\Cref{section:exp_setup}}
                        Store $\tau_i$ in $\mathcal{B}$ \&  $\{ (s, s')^{\text{till horizon}} \}$ in $\mathcal{A}$
                    }
                    progress = $\frac{e_{\theta}(\mathcal{A}, \sigma_k)}{\text{ mean energy on switching to } \sigma_k} - 1$ \;
                    Switch $\sigma_k = \sigma_{k+1}$ if progress $> \alpha$ and $\sigma_k = \sigma_{k-1}$ if progress $< - \alpha$ \tcp*{Annealing}
                    $\mathcal{A} \gets \emptyset$ \& update $\pi_{\theta_G}$ using $\mathcal{B}$
                }}
            \SetKwProg{Main}{Algorithm}{:}{}
            \Main{}{
                Call \FMain \\
                Call \FMainRL \\
            }
        \end{algorithm}

\subsection{Learning Energy Functions}

Given $D$-dimensional i.i.d.\ data samples $\{x \sim p_D \in \mathbb{R}^D\}$ where $p_D$ is the distribution of state-transition features in the expert's trajectories, NEAR learns a parameterised energy function $e_{\theta}(x', \sigma): \mathbb{R}^D \rightarrow \mathbb{R}$ that approximates the energy of samples $x'$ in a perturbed data distribution obtained by the local addition of Gaussian noise $\mathcal{N}(x,\sigma)$ to each sample $x$. The idea here is to jointly estimate the energy functions of several perturbed distributions, i.e.,~$\forall~\sigma~\in~\{\sigma_i\}_{i=1}^L~:~e_{\theta}(x', \sigma)~\approx~\texttt{DIST}_{\sigma}(x')$. The sample's score is computed by taking the gradient of the predicted energy w.r.t.\ the perturbed sample, $s(x',\sigma) = \nabla_{x'} e_{\theta}(x', \sigma)$. The energy network is learnt via denoising score matching (DSM) \citep{vincent2011connection} using this computed score and the final DSM loss in a training batch is computed as an average over the various $\sigma$ values assigned to data samples in the training batch.

\begin{align}
&l_{\text{DSM}}(\sigma) = \frac{1}{2} \mathbb{E}_{p_D} \mathbb{E}_{x' \sim \mathcal{N}(x, \sigma)} \left[ \norm{ \frac{x' - x}{\sigma^2} - \nabla_{x'} e_{\theta}(x', \sigma)} \right] \nonumber \\
&\mathcal{L}_{\text{DSM}}(\{ \sigma_i \}_{i=i}^{L}) \triangleq \frac{1}{L} \sum_{i=1}^{L} l_{\text{DSM}}(\sigma_i) \label{ncsn_final_loss}
\end{align}

\subsection{Annealing}
\label{section:annealing}

\begin{figure}[t!]
    \centering
    \makebox[\textwidth][c]{
        \begin{minipage}{0.7\textwidth}
            \centering
            \subfloat{\includegraphics[width=\textwidth]{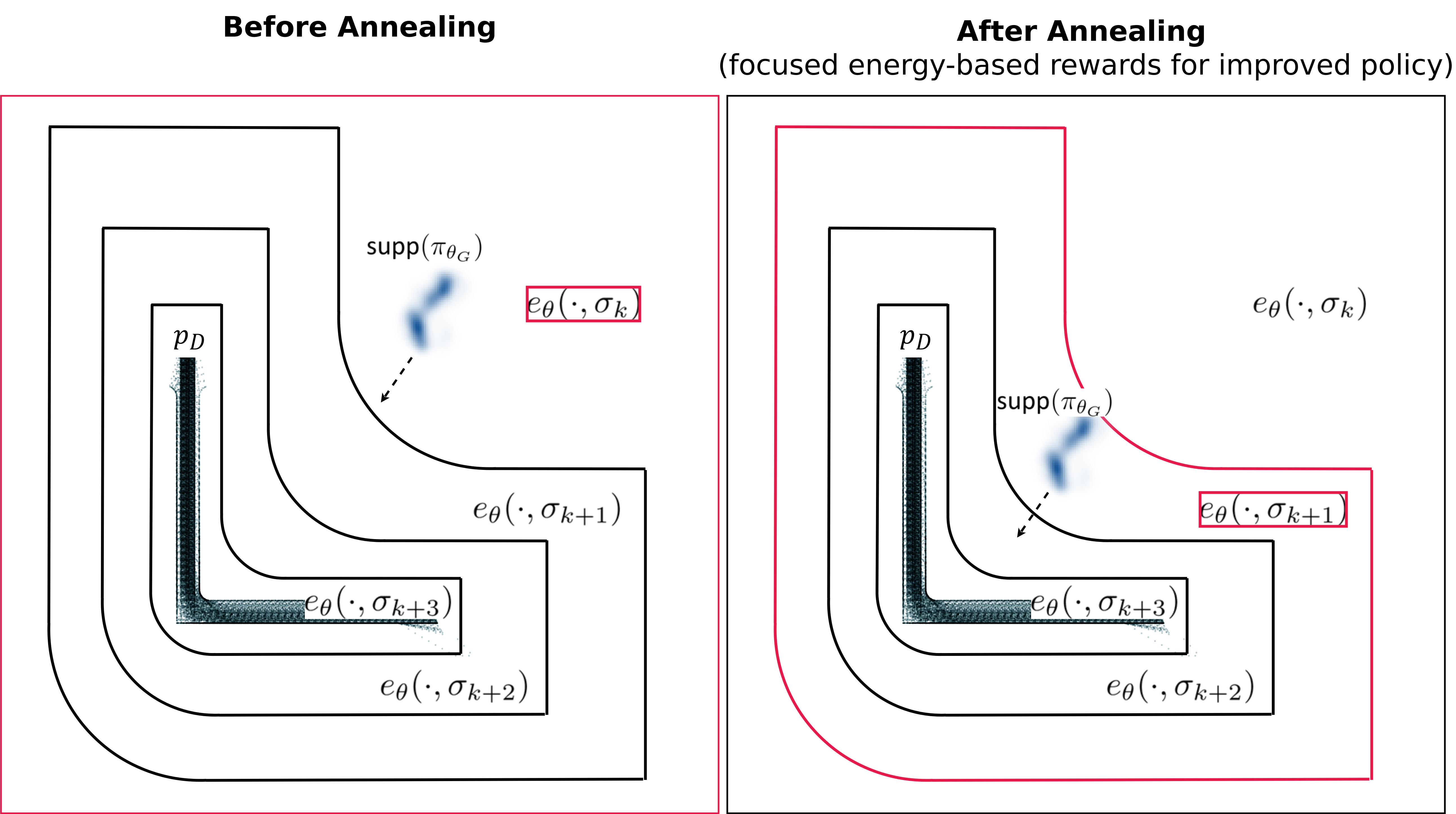}}
        \end{minipage}
    }
    \caption{Annealing (during RL) ensures that the agent always receives a focused and well-defined reward signal, thereby motivating the policy to produce motions similar to $p_D$. Here, $p_D$ is a distribution of the expert's state transitions in a 2D target-reaching task (introduced in \Cref{fig:illustration}). The learnt energy functions $e_{\theta}(\cdot, \sigma_k)$ are illustrated as dilated versions (L-shaped boundaries) of $p_D$ that are well-defined only inside their respective perturbed manifold (``inner'' regions of the L-shaped boundaries). The manifold of policy-generated motions is indicated by $\supp(\pi_{\theta_G})$. The policy is shown to have improved from left to right since $\supp(\pi_{\theta_G})$ for the improved policy is closer to $p_D$. The reward function currently maximised by the agent is highlighted in red. During RL, the agent starts at the energy function (reward) of a lower noise level ($e_{\theta}(\cdot, \sigma_k)$) and switches to a higher one ($e_{\theta}(\cdot, \sigma_{k+1})$) upon receiving a sufficiently high average return. Arrows indicate the gradient of the rewards in $\supp(\pi_{\theta_G})$ (avg. score).}
    \label{fig:annealing_illustration}
\end{figure}

We modify the definition of the perturbed conditional distribution $q(x' | x)$ by flipping the sign of the energy function such that higher energies indicate closeness to $p_D$. This is done to simplify the downstream reinforcement learning such that the predicted energy can be maximised directly. Following the improvements introduced in \cite{song2020improved}, we define $e_{\theta}(x', \sigma) = \sfrac{e_{\theta}(x')}{\sigma}$ where $e_{\theta}(x')$ is an unconditional energy network \footnote{The score is the gradient of the energy function and the norm of the score scales inversely with $\sigma$. The score can hence be approximated by rescaling the energy with $\frac{1}{\sigma}$ \citep{song2020improved}.}. This allows us to learn the energy function of a large number of noise scales with a very small sized dataset.

The appropriate selection of the noise scale ($\{ \sigma_i \}_{i=i}^{L}$) is highly important for the success of this framework. $\sigma_L$ must be small enough that the perturbed distribution $q_{\sigma_L}()$ is nearly identical to $p_D$. This ensures that the policy aims to truly generate samples that resemble those in $p_D$. In contrast, $\sigma_1$ must be sufficiently large such that $e_{\theta}(x', \sigma_1)$ is well-defined, continuous, and non-zero for any sample that is generated by the worst-possible policy. This ensures that the agent always receives an informative signal for improvement. Assuming that policy degradation is unlikely, $\sigma_1$ must be such that $\supp(q_{\sigma_1}())$ effectively contains the support of the distribution induced by a randomly initialised policy network. In practice, these are dataset-dependent hyperparameters.

The trained energy network $e_{\theta}(x', \sigma)$ can directly be used as a reward function to train a policy network $\pi_{\theta_G}$ (say initialised at $\theta_{G0}$) using some fixed noise level $\sigma_k$. But how do we decide on an appropriate $\sigma_k$? Assuming that during training the noise scale was set appropriately, $q_{\sigma_L}()$ is nearly identical to $p_D$ but $\supp(q_{\sigma_L}())$ has a low intersection with $\supp(\pi_{\theta_{G0}})$ \footnote{As a shorthand, we abbreviate the support of the distribution of state transitions induced by rolling out a policy as $\supp(\pi_{\theta_{G}})$.}. On the other hand  $q_{\sigma_1}()$ is an extremely dilated version of $p_D$ but $\supp(q_{\sigma_1}())$ is likely to have a high intersection with $\supp(\pi_{\theta_{G0}})$. This means that any chosen noise-level $\sigma_k$ offers a tradeoff between sample quality and $e_{\theta}(x', \sigma_k)$ being continuous and well-defined in the manifold of the samples generated by the current policy. 

We propose an annealing framework inspired by annealed Langevin dynamics and its predecessors \citep{song2019generative, kirkpatrick1983optimization, neal2001annealed} to ensure that the learnt reward function is always well-defined and continuous while also gradually changing to motivate the policy to get closer to $p_D$ (illustrated in \Cref{fig:annealing_illustration}). Instead of focusing on sample generation, annealing in the context of reinforcement learning focuses on making gradual changes to the agent's reward function. Our annealing framework hence depends on the agent's progress from an imitation perspective. Training is initialised with the energy function of the lowest noise level in the geometric noise scale. Then, at every new noise level in the scale, the agent tracks the average return of the first few policy updates. The noise level of the energy function is increased if the average return of the last few policy updates is higher than some percentage of the initial return. We note that changing the reward function introduces non-stationarity in the reinforcement learning problem, meaning that the learnt policy is susceptible to degradation. To account for this, our framework also lowers the noise level if the return drops below some percentage of the initial return. This means that if the policy gets worse, the noise level decreases, thereby increasing the intersection between $\supp(q_{\sigma}())$ and $\supp(\pi_{\theta_{G}})$ and ensuring that the degraded policy still has an informative reward signal for improvement.

\section{Experiments}
\label{section:experiments}

\subsection{Experimental Setup}
\label{section:exp_setup}

We evaluate NEAR (\Cref{alg:near}) on complex, physics-dependent, contact-rich humanoid motions such as stylised walking, running, and martial arts. The chosen task set demands an understanding of physical quantities such as gravity and the mass/moments of inertia of the character and contains a variety of fast, periodic, and high-acceleration motions. The expert's motions are obtained from the \href{http://mocap.cs.cmu.edu/}{CMU} and \href{https://mocap.cs.sfu.ca/}{SFU motion capture datasets} and contain trajectories of several motions. For each motion, a dataset of state transitions $\mathcal{M} \equiv \{ (s, s') \}$ is created to learn an imitation policy. Rewarding the agent for producing similar state transitions as the expert, incentivises the agent to also replicate the expert's unknown actions. 

\begin{align}
\tilde{r}(s, a, s', g) &= w^{task} r^{task}(s, a, g) + w^{energy}e_{\theta}(s, s') \label{task_rew_combination}
\end{align}

To understand the impact of motion data availability on the algorithm, we also train NEAR in a single-clip setting -- using a single expert motion for training -- on challenging motions like mummy-style walking and spin-kick. Further, to understand the composability of the learnt rewards, we train NEAR with both environment-supplied rewards (such as a target reaching reward) and energy-based rewards learnt from different motion styles (to perform hybrid stylised motions). To incorporate the environment-supplied task reward $r^{task}(s, a, g) \in [0,1]$, we use the same strategy from \cite{peng2021amp} and formulate learning as a goal-conditioned reinforcement learning problem, where the policy is now conditioned on a goal $g$ and maximises a reward $\tilde{r}(s, a, s', g)$ (\Cref{task_rew_combination}). Details of the tasks and goals can be found in \Cref{appendix:tasks}. We also apply an additional reward transformation of $\tanh{(\sfrac{(\tilde{r} - r')}{10})}$ where $r'$ is the mean horizon-normalised return received by the agent in the last $k=3$ policy iterations, $r' = \texttt{mean}( \{ \sfrac{\tilde{R}_{t-i}}{horizon} \}_{i=1}^{k})$. This bounds the unnormalised energy reward to a fixed interval so that changes between the noise levels $\sigma$ are smoother. Additionally, it grounds the agent's current progress in relation to its average progress in the last few iterations. The policy is trained using Proximal Policy Optimisation \citep{schulman2017proximal} and we use the following quantitative metrics to measure the performance of our algorithm.

\qquad \emph{\textbf{Average Dynamic Time Warping Pose Error:}} This is the mean dynamic time warping (DTW) error \citep{sakoe1978dynamic} between trajectories of the agent's and the expert's poses averaged across all expert motions in the dataset. The DTW error is computed using $\norm{\hat{x}_t - x_t }_2$ as the cost function, where $\hat{x}_t$ and $x_t$ are the Cartesian positions of the reference character and agent's bodies at time step $t$. To ensure that the pose error is only in terms of the character's local pose and not its global position in the world, we transform each Cartesian position to be relative to the character's root body position at that timestep ($\hat{x}_t \gets \hat{x}_t - \hat{x}_{t}^{root}$ and $x_t \gets x_t - x_{t}^{root}$).

\qquad \emph{\textbf{Spectral Arc Length:}} Spectral Arc Length (SAL) \citep{beck2018sparc, balasubramanian2011robust, balasubramanian2015analysis} is a measure of the smoothness of a trajectory and is an interesting metric to determine the policy's ability to perform periodic motions in a controlled manner. SAL relies on the assumption that smoother motions are comprised of fewer and low-valued frequency domain components while jerkier motions have a more complex frequency domain signature. It is computed by adding up the lengths of discrete segments (arcs) of the normalised frequency-domain map of a motion (in this case, we do not transform the positions to the agent's local coordinate system).

\subsection{Results}
\label{section:results}
We compare Noise-conditioned Energy-based Annealed Rewards (NEAR) with Adversarial Motion Priors (AMP) \citep{peng2021amp} and in both cases only use the learnt rewards to train the policy. AMP is used as a baseline since it is an improved formulation of previous state-of-the-art techniques \citep{torabi2018generative, ho2016generative} and has shown superior results in the state-only adversarial IL literature. Each algorithm-task combination is trained 5 times independently and the mean performance metrics across 20 episodes of each run are compared. Both algorithms are trained for a fixed number of maximum iterations. 

\Cref{fig:near_trained_policy} shows snapshots of the policies trained using NEAR. We find that NEAR achieves very close imitation performance with the expert's trajectory and learns policies that are visually smoother and more natural. For the quantitative metrics, we use the average performance at the end of training for a fair comparison and find that both NEAR and AMP are roughly similar across all metrics (\Cref{tab:near_amp_compare}). In most experiments, NEAR is closer to the expert in terms of the spectral arc length while AMP has a better pose error. NEAR also outperforms AMP in stylised goal-conditioned tasks, producing motions that both imitate the expert's style while simultaneously achieving the desired global goal (\Cref{tab:near_amp_compare_composed} and \Cref{fig:near_trained_policy} bottom). From the experiments on spatially composed learnt rewards, we find that NEAR can also learn hybrid policies such as waking while waving. Finally, we notice that NEAR performs poorly in single-clip imitation tasks, highlighting the challenges of accurately capturing the expert's data distribution in data-limited conditions. Conversely, AMP is less affected by data unavailability since the discriminator in AMP is simply a classifier and does not explicitly capture the expert's distribution.


\begin{table}[t!]
\centering
\caption{A comparison of the avg. pose error (shaded, lower is better) and spectral arc length (non-shaded, closer to expert is better) at the end of training. Stdev.\ across independent runs is shown as an error value ($\pm$).}
\label{tab:near_amp_compare}
\resizebox{\textwidth}{!}{%
\begin{tabular}{@{}l|lr|lr|lr|lr|lr|lr@{}}
\toprule
\textbf{Algorithm} & \multicolumn{2}{l|}{\textbf{Walking (74 clips)}}                                                 & \multicolumn{2}{l|}{\textbf{Running (26 clips)}}                                                 & \multicolumn{2}{l|}{\textbf{Left Punch (19 clips)}}                                              & \multicolumn{2}{l|}{\textbf{Crane Pose (3 clips)}}                                              & \multicolumn{2}{l|}{\textbf{Mummy Walk (1 clip)}}                                               & \multicolumn{2}{l}{\textbf{Spin Kick (1 clip)}}                                                 \\ \midrule
\textbf{NEAR}             & \multicolumn{1}{r}{\cellcolor[HTML]{EFEFEF}\textbf{\errval{0.51}{0.15}}} & \textbf{-\errval{7.52}{1.32}} & \multicolumn{1}{r}{\cellcolor[HTML]{EFEFEF}\textbf{\errval{0.62}{0.17}}} & \textbf{-\errval{7.24}{1.59}} & \multicolumn{1}{r}{\cellcolor[HTML]{EFEFEF}\errval{0.37}{0.05}}          & \textbf{-\errval{6.87}{1.47}} & \multicolumn{1}{r}{\cellcolor[HTML]{EFEFEF}\errval{0.94}{0.15}}          & -\errval{6.6}{1.97}          & \multicolumn{1}{r}{\cellcolor[HTML]{EFEFEF}\errval{0.66}{0.39}}          & \textbf{-\errval{4.72}{1.2}} & \multicolumn{1}{r}{\cellcolor[HTML]{EFEFEF}\errval{0.78}{0.05}}         & -\errval{5.59}{2.26}          \\
\textbf{AMP}              & \multicolumn{1}{r}{\cellcolor[HTML]{EFEFEF}\textbf{\errval{0.51}{0.07}}} & -\errval{8.78}{1.04}          & \multicolumn{1}{r}{\cellcolor[HTML]{EFEFEF}\errval{0.65}{0.01}}          & -\errval{9.71}{1.54}          & \multicolumn{1}{r}{\cellcolor[HTML]{EFEFEF}\textbf{\errval{0.32}{0.01}}} & -\errval{9.93}{3.28}          & \multicolumn{1}{r}{\cellcolor[HTML]{EFEFEF}\textbf{\errval{0.82}{0.09}}} & \textbf{-\errval{8.1}{1.18}} & \multicolumn{1}{r}{\cellcolor[HTML]{EFEFEF}\textbf{\errval{0.41}{0.01}}} & -\errval{13.84}{1.12}        & \multicolumn{1}{r}{\cellcolor[HTML]{EFEFEF}\textbf{\errval{0.58}{0.1}}} & \textbf{-\errval{3.16}{0.73}} \\
\textbf{Expert}           & \cellcolor[HTML]{EFEFEF}-                                            & -5.4                      & \cellcolor[HTML]{EFEFEF}-                                            & -3.79                     & \cellcolor[HTML]{EFEFEF}-                                            & -1.73                     & \cellcolor[HTML]{EFEFEF}-                                            & -12.28                   & \cellcolor[HTML]{EFEFEF}-                                            & -4.71                    & \cellcolor[HTML]{EFEFEF}-                                           & -3.39                    
\end{tabular}%
}
\end{table}

\begin{table}[t!]
\centering
\caption{A comparison of the avg. pose error (shaded, lower is better) and task return (non-shaded, higher is better) at the end of training with composed reward functions. Stdev.\ across independent runs is shown as an error value ($\pm$).}
\label{tab:near_amp_compare_composed}
\resizebox{0.7\textwidth}{!}{%
\begin{tabular}{@{}l|rr|rr|lr@{}}
\toprule
\textbf{Algorithm} & \multicolumn{2}{l|}{\textbf{Target Reaching (walking)}}                     & \multicolumn{2}{l|}{\textbf{Target Reaching (running)}}                     & \multicolumn{2}{l}{\textbf{Target Reaching \& Punching}} \\ \midrule
\textbf{NEAR}             & \cellcolor[HTML]{EFEFEF}\textbf{\errval{0.94}{0.11}} & \textbf{\errval{2.75}{0.82}} & \cellcolor[HTML]{EFEFEF}\textbf{\errval{1.18}{0.04}} & \textbf{\errval{1.74}{0.46}} & \cellcolor[HTML]{EFEFEF}-   & \errval{3.6}{2.64}             \\
\textbf{AMP}              & \cellcolor[HTML]{EFEFEF}\errval{1.09}{0.13}          & \errval{2.23}{0.24}          & \cellcolor[HTML]{EFEFEF}\errval{1.77}{0.31}          & -\errval{0.15}{1.06}         & \cellcolor[HTML]{EFEFEF}-   & \textbf{\errval{3.85}{0.76}}  
\end{tabular}%
}
\end{table}

\begin{figure}[t!]
\makebox[\textwidth][c]{
    \centering
    \includegraphics[width=0.99\textwidth]{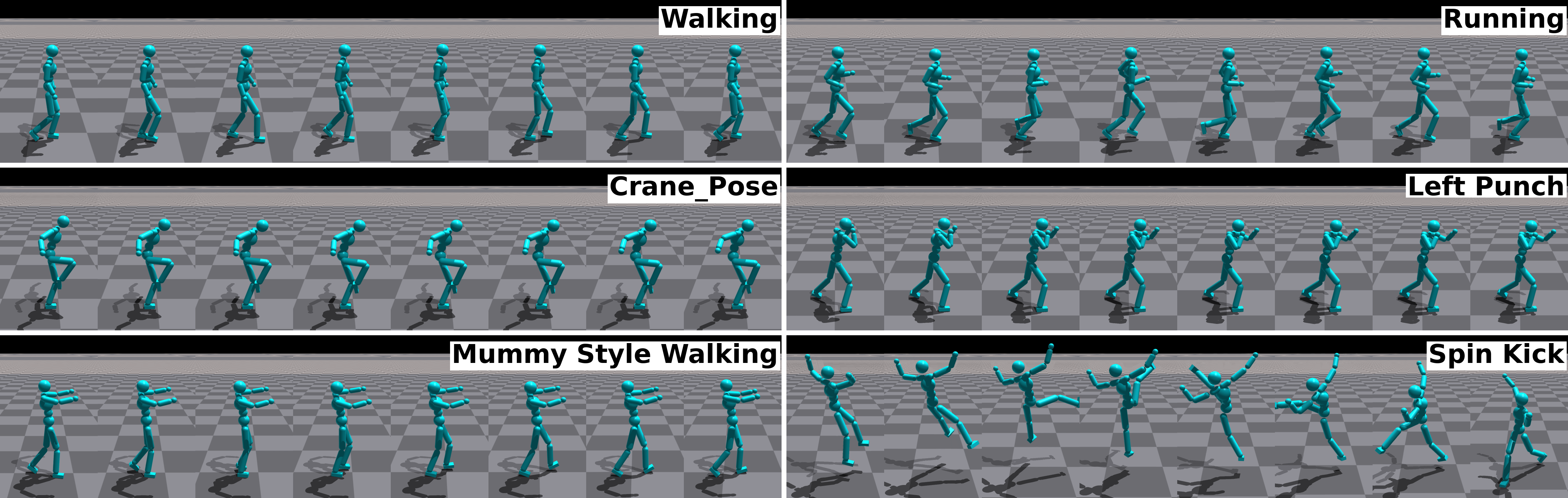}
    }
\makebox[\textwidth][c]{
    \begin{minipage}[t]{0.49\textwidth}        
    \centering
    \includegraphics[width=0.85\textwidth]{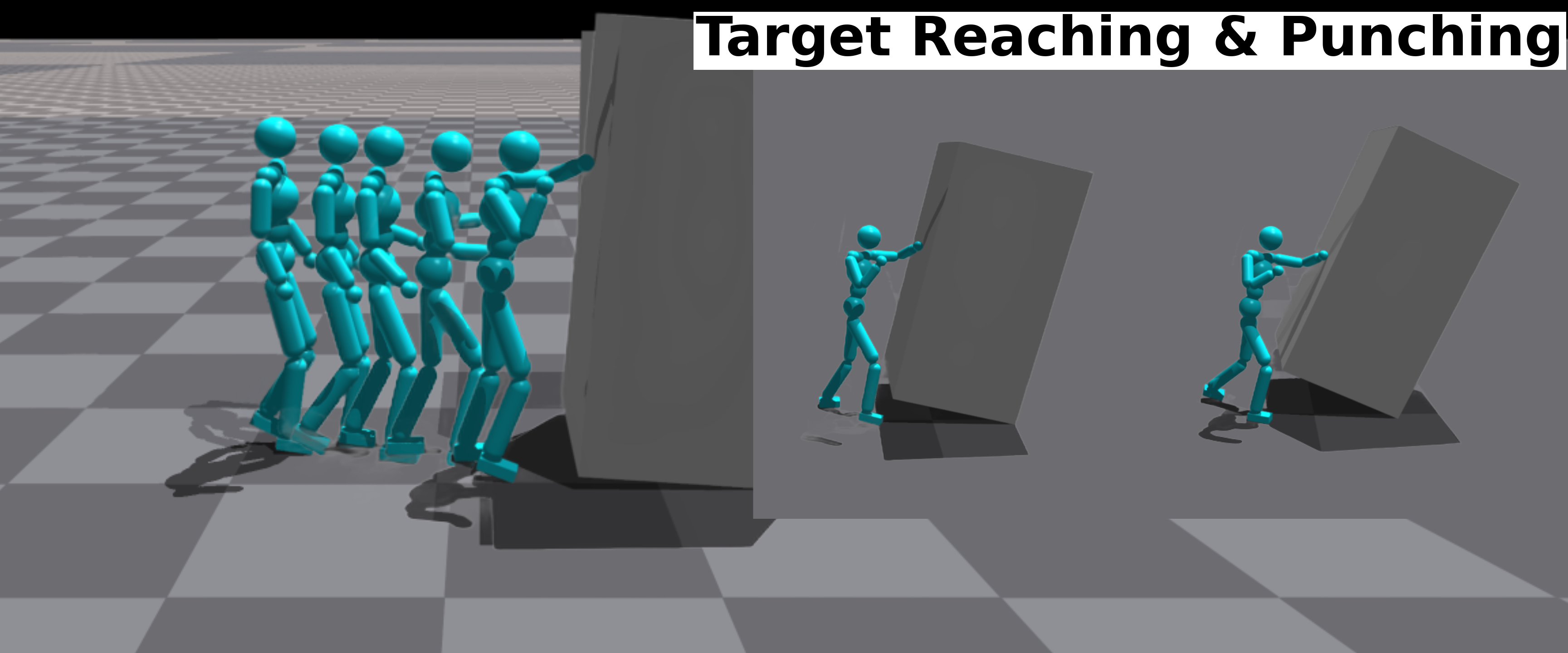}
    \end{minipage}

    \begin{minipage}[t]{0.49\textwidth}        
    \centering
    \includegraphics[width=0.85\textwidth]{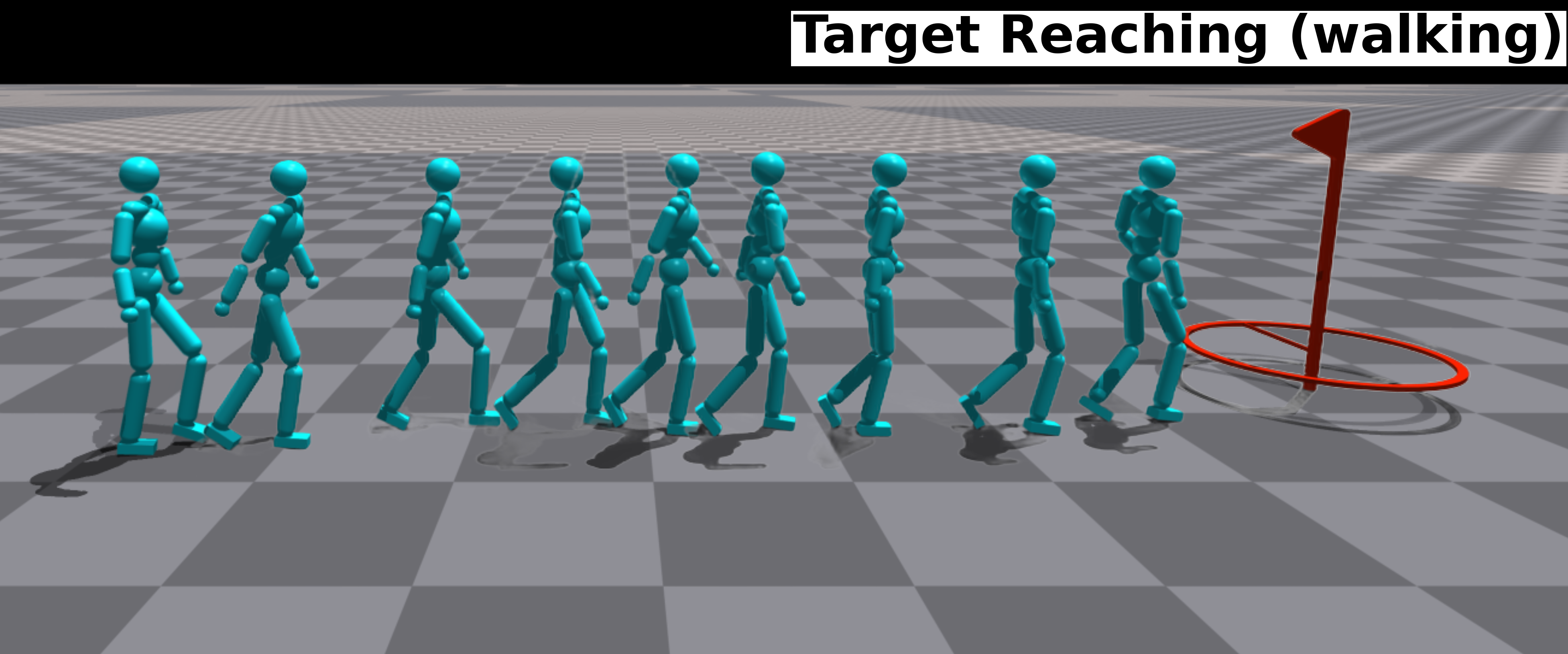}
    \end{minipage}

}
    \caption{Snapshots of the policies trained with NEAR. Mummy-style walking and spin-kick are single-clip imitation tasks. The bottom row shows goal-conditioned RL policies that also optimise an environment-provided task reward.}
    \label{fig:near_trained_policy}
\end{figure}

\subsection{Ablations}
\label{section:ablations}

We also conduct ablation experiments (\Cref{tab:ablations}) that help identify the crucial components of NEAR. The main focus of these experiments is to understand the contributions of annealing and the effects of using an environment-provided task reward (without goal-conditioning). For walking and running, the task reward favoured forward motion and episode length while for the crane pose task it only favoured episode length. Given these parameters of interest, we train ablated configurations of NEAR with either annealing or a reward function conditioned on a fixed noise level ($\sigma_5 \approx 9.21$) and with either only a learnt reward ($e_{\theta}$) or a composed reward function ($\Tilde{r}$: \Cref{task_rew_combination}).

It can be seen that the addition of the task reward leads to an improvement in the pose error. This is especially apparent in tasks like walking and running where the task reward is closely aligned with the imitation objective. It must however be noted that the addition of the task reward does mean that the agent has a reduced closeness to specific characteristics of the expert's motion like spectral arc length, velocity, and jerk. Ultimately the closeness of the imitation under a combined reward still highly depends on the harmony between the two reward functions.

Annealing does not have a significant impact on performance in walking and running, however, leads to an improvement in more complex, non-periodic cases like the crane-pose task. It is possible that for complex tasks, the expert distribution is more densely concentrated. In this case, a higher noise level for a complex task might be more informative than one for a simpler task for which the expert distribution is more spread out (examples in \Cref{appendix:annealing_discussion}). The increased information available from annealing might hence be the reason for better results with annealing in complex tasks. 

\begin{table}[t!]
    \centering
    \caption{Avg. pose error (shaded, lower is better) and spectral arc length (non-shaded, closer to expert is better) in several ablated configurations of NEAR. Stdev.\ across independent runs is shown as an error value (\scriptsize{$\pm$}).}
    \label{tab:ablations}
    \begin{minipage}{0.49\textwidth} 
        \centering
        \caption*{Effect of task rewards}
        \resizebox{\textwidth}{!}{%
        \begin{tabular}{@{}l|ll|ll|ll@{}}
        \toprule
        \textbf{Config}                    & \multicolumn{2}{l|}{\textbf{Walking}}                                        & \multicolumn{2}{l|}{\textbf{Running}}                                         & \multicolumn{2}{l}{\textbf{Crane Pose}}                                       \\ \midrule
        \multicolumn{1}{l|}{\textbf{$\bm{\sigma_5}$ \& $\bm{e_{\theta}}$}} & \cellcolor[HTML]{EFEFEF}\errval{0.49}{0.22}          & \textbf{-\errval{7.1}{1.94}}  & \cellcolor[HTML]{EFEFEF}\errval{0.62}{0.18}          & \textbf{-\errval{6.69}{1.85}}  & \cellcolor[HTML]{EFEFEF}\errval{1.38}{0.8}           & \textbf{-\errval{6.03}{1.79}}  \\
        \multicolumn{1}{l|}{\textbf{$\bm{\sigma_5}$ \& $\bm{\tilde{r}}$}}  & \cellcolor[HTML]{EFEFEF}\textbf{\errval{0.42}{0.02}} & -\errval{8.7}{1.2}            & \cellcolor[HTML]{EFEFEF}\textbf{\errval{0.57}{0.04}} & -\errval{8.61}{0.86}           & \cellcolor[HTML]{EFEFEF}\textbf{\errval{1.23}{0.07}} & -\errval{4.34}{1.02}           \\
        \multicolumn{1}{l|}{\textbf{Expert}}                     & \cellcolor[HTML]{EFEFEF}-                        & \multicolumn{1}{r|}{-5.4} & \cellcolor[HTML]{EFEFEF}-                        & \multicolumn{1}{r|}{-3.79} & \cellcolor[HTML]{EFEFEF}-                        & \multicolumn{1}{r}{-12.28}
        \end{tabular}%
        }
    \end{minipage}
    \hfill
    \begin{minipage}{0.49\textwidth}
        \centering
        \caption*{Effect of annealing}
        \resizebox{\textwidth}{!}{%
        \begin{tabular}{@{}l|ll|ll|ll@{}}
        \toprule
        \textbf{Config}         & \multicolumn{2}{l|}{\textbf{Walking}}                                        & \multicolumn{2}{l|}{\textbf{Running}}                                         & \multicolumn{2}{l}{\textbf{Crane Pose}}                                       \\ \midrule
        \textbf{anneal \& $\bm{e_{\theta}}$}          & \cellcolor[HTML]{EFEFEF}\errval{0.51}{0.15}          & -\errval{7.52}{1.32}          & \cellcolor[HTML]{EFEFEF}\textbf{\errval{0.62}{0.17}} & -\errval{7.24}{1.59}           & \cellcolor[HTML]{EFEFEF}\textbf{\errval{0.94}{0.15}} & \textbf{-\errval{6.6}{1.97}}   \\
        \textbf{$\bm{\sigma_5}$ \& $\bm{e_{\theta}}$} & \cellcolor[HTML]{EFEFEF}\textbf{\errval{0.49}{0.22}} & \textbf{-\errval{7.1}{1.94}}  & \cellcolor[HTML]{EFEFEF}\textbf{\errval{0.62}{0.18}} & \textbf{-\errval{6.69}{1.85}}  & \cellcolor[HTML]{EFEFEF}\errval{1.38}{0.8}           & -\errval{6.03}{1.79}           \\
        \textbf{Expert}                               & \cellcolor[HTML]{EFEFEF}-                        & \multicolumn{1}{r|}{-5.4} & \cellcolor[HTML]{EFEFEF}-                        & \multicolumn{1}{r|}{-3.79} & \cellcolor[HTML]{EFEFEF}-                        & \multicolumn{1}{r}{-12.28}
        \end{tabular}%
        }
    \end{minipage}
\end{table}

\section{Limitations \& Conclusions}
\begin{wrapfigure}{R}{0.28\textwidth} 
    \centering
    \includegraphics[width=0.95\linewidth]{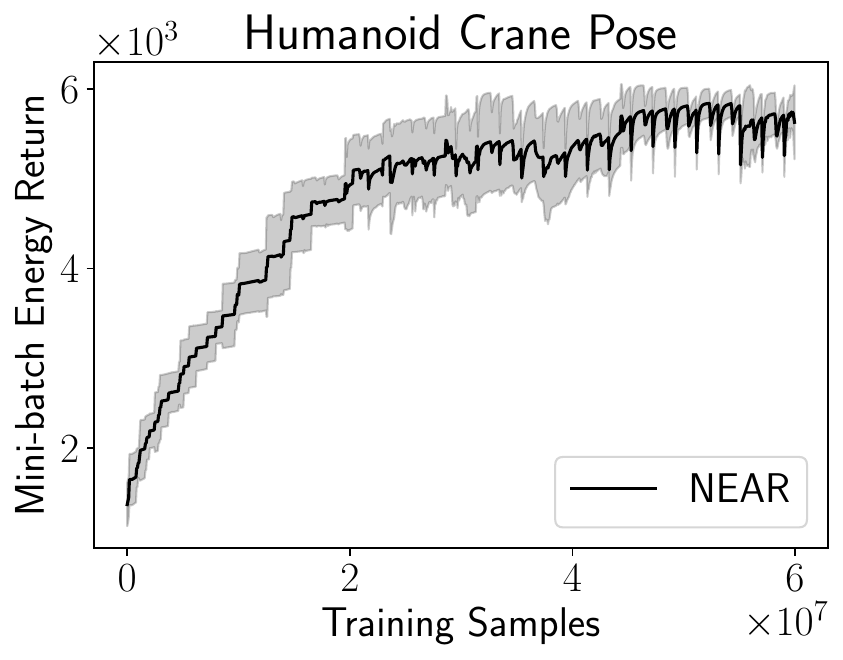}
    \includegraphics[width=\linewidth]{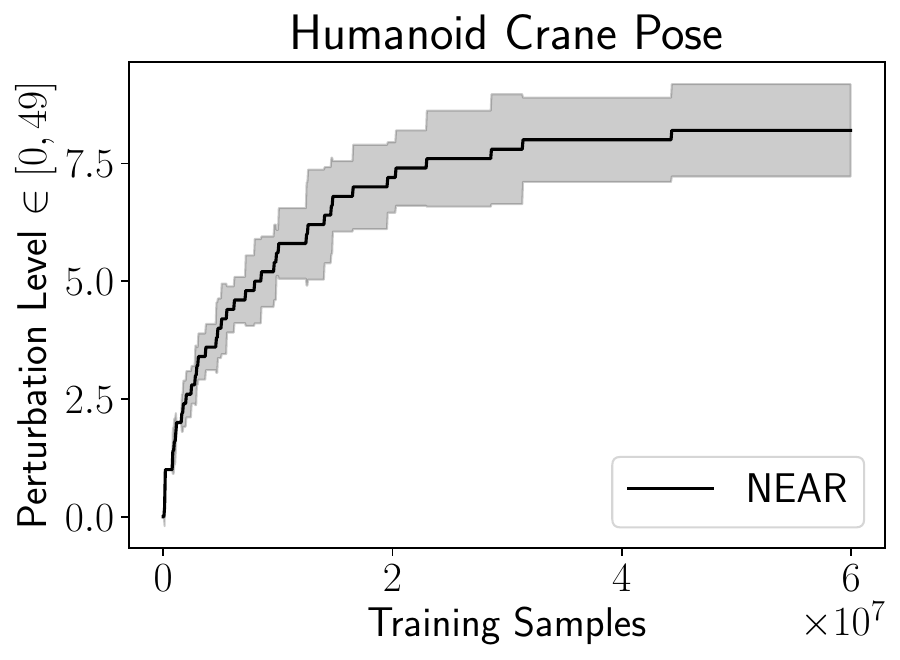}
    \caption{Annealing at higher noise levels causes a drop in the energy reward's return.}
    \label{fig:near_anneal_degradation}
\end{wrapfigure}
While NEAR is capable of generating high-quality, life-like motions and also outperforms AMP in several tasks, it is still prone to some limitations. The annealing strategy discussed in \Cref{section:annealing} does lead to progressively improving rewards, however, annealing at high noise levels also tends to cause unpredictability in the received rewards. We attribute this unpredictability to the static nature of our annealing strategy. While the geometric nature of the noise scale indeed maximises the intersection between $\supp(q_{\sigma_k}())$ and $\supp(q_{\sigma_{k+1}}())$ \citep{song2019generative}, at higher values of $k$, a fixed percentage increase in the return does not always ensure that $\supp(\pi_{\theta})$ is outside $\supp(q_{\sigma_k}()) \setminus \supp(q_{\sigma_{k+1}})$ (refer to \Cref{fig:annealing_illustration}). This means that a change in noise level suddenly causes the energy function to be ill-defined on a portion of the manifold of policy-generated motions, leading to poor rewards for these transitions and the introduction of non-stationarity in the problem. This can be verified with the energy return plot, where the return often drops at noise level changes indicating that the changed noise level is suddenly low-rewarding (\Cref{fig:near_anneal_degradation}). Degradation at higher noise levels highlights the sensitivity of NEAR to the noise scale and is a limitation of this framework. Improvements can perhaps be made by experimenting with a linear noise scale, even more noise levels, or a more dynamic form of annealing.

Interestingly, ablation studies show that the addition of a task reward reduced the overall unpredictability of NEAR at the later stages of training. A reason for this could be the reduction in non-stationarity by the addition of a fixed component to the reward function. Finally, state-only reward learning techniques like NEAR and AMP are also quite sensitive to the motion dataset. We find that the policy is prone to converging to locally optimal behaviour if the dataset contains bi-directional state transitions (such that $(s, s')$ and $(s', s)$ are equally likely to occur). 

Despite these limitations, the fundamental idea of using energy functions as reward functions still seems quite promising. Improvements can be made in several directions in the future. For example, it might be interesting to explore the effects of different noise distributions (instead of Gaussian noise) in the NCSN component of NEAR. Improvements to the annealing strategy could perhaps also be made by introducing softer noise-level changing criteria. One interesting idea could be to only use the updated reward function (post-annealing) for the portion of the transitions that are within the support of the new energy function and maintain the old reward function for other transitions. This might greatly reduce the negative effects of annealing shown in \Cref{fig:near_anneal_degradation}. Other reinforcement learning improvements could also be made through ideas like targeted exploration, domain randomisation, or by changing the action sampling distribution from being Gaussian to a different probability distribution \citep{eberhard2023pink}.

To conclude, this paper proposes an energy-based framework for imitation learning in partially observable conditions. Our framework builds on Noise-conditioned Score Networks \citep{song2019generative} to explicitly learn a series of smooth energy functions from a dataset of expert demonstration motions. We propose to use these energy functions as reward functions to learn imitation policies via reinforcement learning. Further, we propose an annealing framework to gradually change the learnt reward functions, thereby providing a more focused and well-defined reward signal to the agent. Our proposed imitation learning algorithm called Noise-conditioned Energy-based Annealed Rewards (NEAR) outperforms state-of-the-art methods like Adversarial Motion Priors (AMP) in several quantitative metrics as well as qualitative evaluation across a series of complex contact-rich human imitation tasks. 

\subsubsection*{Acknowledgments}

The authors acknowledge the use of computational resources of the DelftBlue supercomputer provided by Delft High Performance Computing Centre (https://www.tudelft.nl/dhpc). The data used in this project was obtained from (i) mocap.cs.cmu.edu (created with funding from NSF EIA-0196217) and (ii) mocap.cs.sfu.ca (created with funding from NUS AcRF R-252-000-429-133 and SFU President’s Research Start-up Grant).

\newpage
\bibliography{iclr2025_conference}
\bibliographystyle{iclr2025_conference}

\appendix
\newpage

\section{Proofs \& Extended Explanations}

\subsection{Proof of Energy Function Smoothness}
\label{appendix:smoothness_proof}

In this section, we prove that an energy function learnt via denoising score matching \citep{vincent2011connection} is smooth and well-defined in the manifold of perturbed samples. 

\begin{lemma}
\label{lemma1}
Let $p_D$ be a distribution with support contained in a closed manifold $\mathcal{M} \subseteq \mathbb{R}^d$. We assume that $p_D$ is continuous in this manifold. Let $q_{\sigma}$ be a distribution supported in a closed manifold $\mathcal{P} \subseteq \mathbb{R}^d$ obtained by the addition of Gaussian noise $\mathcal{N}(x, \sigma) \triangleq \sfrac{(\exp{-E_{\sigma}(x)})}{Z}$ to each sample $x$ in $p_D$ (s.t. $q_{\sigma}(x) = \int \mathcal{N}(x' | x, \sigma I) p_D(x) dx$). Then $q_{\sigma}$ is continuous in $\mathcal{P}$ and $\nabla_x \log q_{\sigma}(x) = - \nabla_x E_{\sigma}(x)$ is smooth in $\mathcal{P}$.
\end{lemma}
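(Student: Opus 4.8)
The plan is to recognise that the marginal $q_\sigma$ is nothing but the convolution of $p_D$ with a Gaussian kernel, and then to exploit the fact that the Gaussian is a $C^\infty$ function all of whose derivatives are bounded. Writing $g_\sigma(z) = (2\pi\sigma^2)^{-d/2}\exp(-\|z\|^2/2\sigma^2)$ for the Gaussian density, the definition $q_\sigma(x) = \int \mathcal{N}(x \mid x', \sigma I)\, p_D(x')\, dx'$ becomes $q_\sigma(x) = \int g_\sigma(x - x')\, p_D(x')\, dx' = (p_D * g_\sigma)(x)$. This rewriting immediately shifts the smoothness burden off of $p_D$ (which is only assumed continuous) and onto the kernel $g_\sigma$.

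First I would establish that $q_\sigma \in C^\infty(\mathbb{R}^d)$ by differentiating under the integral sign. For any multi-index $\alpha$, the partial derivative $\partial^\alpha_x g_\sigma(x - x')$ is a polynomial in $(x - x')$ times $g_\sigma(x - x')$, and is therefore bounded uniformly in $x$ and $x'$ by a constant $M_\alpha = \sup_z |\partial^\alpha g_\sigma(z)| < \infty$. Because $p_D$ is a probability density (so $\int p_D = 1$), supported on the closed---hence compact---manifold $\mathcal{M}$, the function $x' \mapsto M_\alpha\, p_D(x')$ is integrable and dominates $|p_D(x')\,\partial^\alpha_x g_\sigma(x - x')|$ uniformly for $x$ in any bounded set. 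The dominated-convergence form of the Leibniz rule then lets me pass every derivative through the integral, giving $\partial^\alpha_x q_\sigma(x) = \int p_D(x')\,\partial^\alpha_x g_\sigma(x - x')\,dx'$; since this holds for all $\alpha$, $q_\sigma$ is infinitely differentiable, and in particular continuous, on all of $\mathbb{R}^d$ and thus on $\mathcal{P}$.

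Next I would handle the score. The key observation is that the Gaussian is strictly positive, so $q_\sigma(x) = \int g_\sigma(x-x')\, p_D(x')\, dx' > 0$ for every $x$, the integrand being positive on the set where $p_D > 0$, which carries positive measure. Hence $\log q_\sigma$ is well-defined everywhere, and because $\log$ is smooth on $(0,\infty)$ while $q_\sigma$ is smooth and strictly positive, the composition $\log q_\sigma$ is $C^\infty$; differentiating once more shows that $\nabla_x \log q_\sigma(x) = \nabla_x q_\sigma(x)/q_\sigma(x)$ is itself smooth on $\mathcal{P}$. Finally, the claimed identity is essentially definitional: writing $q_\sigma$ in Boltzmann form $q_\sigma(x) = \exp(-E_\sigma(x))/Z$ gives $\log q_\sigma(x) = -E_\sigma(x) - \log Z$, whence $\nabla_x \log q_\sigma(x) = -\nabla_x E_\sigma(x)$, and the smoothness just established transfers to $\nabla_x E_\sigma$.

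The main obstacle---and the only step requiring genuine care---is the justification of differentiation under the integral sign, i.e.\ producing, for each order of derivative, an $x$-independent integrable dominating function. This is exactly where the hypotheses that $p_D$ is a compactly supported density and that $g_\sigma$ has uniformly bounded derivatives of every order are used; once the uniform bound $M_\alpha$ is in hand, the interchange is routine. A secondary point to state carefully is the strict positivity of $q_\sigma$, which guarantees that we never take the logarithm of zero and keeps the score finite and smooth throughout $\mathcal{P}$.
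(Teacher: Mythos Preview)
Your proposal is correct and follows essentially the same approach as the paper: both arguments rest on the fact that convolving with a Gaussian kernel yields a smooth function. The paper's proof is a two-sentence sketch that simply invokes this convolution-smoothing property, whereas you unpack it fully---justifying differentiation under the integral via dominated convergence, establishing strict positivity so the logarithm is well-defined, and verifying the Boltzmann identity---so your version is considerably more rigorous than what appears in the paper, but the underlying idea is identical.
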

\begin{proof}
The convolution of a function with a Gaussian kernel results in a smooth function. By the same reasoning, $\nabla_x \log q_{\sigma}(x)$ is differentiable in $\mathcal{P}$ because $q_{\sigma}$ is continuous in $\mathcal{P}$. 
\end{proof}

\Cref{lemma1} shows that a perturbed distribution and its score function are both smooth in the manifold of perturbed samples. Given this perturbed distribution, denoising score matching aims to learn a score function $s(x,\sigma) = \nabla_{x} e_{\theta}(x, \sigma)$ where $e_{\theta}(x, \sigma) \approx E_{\sigma}(x)$. From the universal approximation theorem \citep{cybenko1989approximation, hornik1989multilayer}, it follows that a sufficiently large neural network can approximate any continuous function (score function in our case) on a compact domain to arbitrary precision, meaning that $\nabla_{x} e_{\theta}(x, \sigma)$ is a smooth function.

\begin{theorem}
Given a distribution $q_{\sigma}$ that is supported in a closed manifold $\mathcal{P}$ and is also continuous in this manifold, a parameterised energy function learnt via denoising score matching on samples drawn from $q_{\sigma}$ is smooth in $\mathcal{P}$.
\end{theorem}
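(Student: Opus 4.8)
The plan is to separate the two distinct facts that together give the statement: that the learnt energy function is \emph{smooth by construction}, and that it \emph{accurately represents} the already-smooth true energy landscape. Smoothness will follow essentially for free from the network architecture, while fidelity to the true energy will rely on \Cref{lemma1} together with the universal approximation theorem.

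First I would use \Cref{lemma1} to pin down a smooth target. Since $q_\sigma$ is continuous and supported in the closed manifold $\mathcal{P}$, \Cref{lemma1} gives that $\nabla_x \log q_\sigma(x) = -\nabla_x E_\sigma(x)$ is smooth on $\mathcal{P}$. This is exactly the field that denoising score matching tries to recover, and the point to retain is that it is a smooth function on a compact domain.

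Second, I would establish smoothness of $e_\theta$ directly from its parameterisation, independently of any optimisation. The energy network is a finite feedforward composition of affine maps and smooth activation functions (for instance softplus or $\tanh$, as is standard in score-based models), so the map $x \mapsto e_\theta(x, \sigma)$ is $C^\infty$ and therefore its gradient $s(x, \sigma) = \nabla_x e_\theta(x, \sigma)$ is $C^\infty$ on $\mathcal{P}$ as well. The upshot is that whatever parameters $\theta$ the DSM objective selects, the resulting energy is automatically smooth in $\mathcal{P}$: smoothness is a property of the hypothesis class, not of any particular minimiser.

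Finally, I would close the loop with the universal approximation theorem to argue that this smooth function is moreover the right one. Because the DSM objective differs from score matching against $q_\sigma$ only by an additive constant (the identity noted in \Cref{section:score_based_models}), its minimiser drives $\nabla_x e_\theta$ toward the smooth target $\nabla_x E_\sigma$; since that target is continuous on the compact set $\mathcal{P}$, a sufficiently large network can match it to arbitrary precision. The main obstacle --- and the point I would be most careful to flag --- is the gap between function approximation and derivative approximation: the classical universal approximation theorem guarantees uniform approximation of a target \emph{function}, whereas DSM matches \emph{gradients}, so a fully rigorous argument would need either a derivative-form approximation theorem or a bound showing that the DSM optimum controls $\nabla_x e_\theta$ in sup-norm. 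Crucially, though, the theorem's actual conclusion --- smoothness in $\mathcal{P}$ --- is guaranteed unconditionally by the architecture, so the approximation-theoretic step is needed only to support the accompanying claim that the smooth $e_\theta$ coincides with the true energy.
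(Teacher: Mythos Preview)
Your proposal is correct but organises the argument differently from the paper, and in a way that is arguably cleaner. The paper's proof runs: \Cref{lemma1} gives a smooth target score; by the universal approximation theorem $\nabla_x e_\theta$ can approximate this smooth function; hence $e_\theta$ has a continuous gradient and is therefore itself continuous on $\mathcal{P}$. In other words, the paper derives smoothness of $e_\theta$ \emph{via} the approximation step --- smoothness is treated as something inherited from the target through UAT. You instead decouple the two claims: smoothness of $e_\theta$ comes for free from the hypothesis class (smooth activations, finite compositions), while UAT is invoked only to argue that this already-smooth function is the right one. Your decomposition buys robustness --- smoothness holds for any $\theta$, not just a good minimiser --- and it sidesteps the logical soft spot in the paper's version, namely that approximating a smooth function does not in itself make the approximator smooth. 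The concern you flag about function-versus-derivative approximation is legitimate and in fact bites harder on the paper's argument than on yours, since the paper leans on UAT for the smoothness conclusion whereas you use it only for fidelity. One small caveat: you assume smooth activations such as softplus or $\tanh$, but the energy network in this paper uses ELU, which is only $C^1$; this does not break your argument for the theorem as stated (continuous gradient suffices), but you should not claim $C^\infty$ without checking the actual architecture.
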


\begin{proof}
\Cref{lemma1} implies that the gradient of the score function is smooth in $\mathcal{P}$ and can be approximated smoothly by a neural network $\nabla_{x} e_{\theta}(x, \sigma)$. Since a function with a continuous gradient in a domain is itself continuous in that domain, it follows that $e_{\theta}(x, \sigma)$ is also continuous in $\mathcal{P}$.
\end{proof}

Continuity of $e_{\theta}(x, \sigma)$ in the whole space requires $\mathcal{P}$ to be equivalent to $\mathbb{R}^d$. However, the annealing strategy introduced in this paper and a sufficiently large $\sigma$ ensure that the manifold of policy-generated samples always lies in $\mathcal{P}$.

\subsection{Annealing Discussion}
\label{appendix:annealing_discussion}

\subsubsection{Why Anneal If the Learnt Energy Function Is Smooth?}

If the learnt energy function $e_{\theta}(., \sigma_k)$  is smooth, then simply maximising it should still provide an unambiguous improvement signal to the agent. Why then must we anneal the energy functions? 

Annealing not only ensures that the reward signal is well-defined, but it also progressively provides more ``focused'' rewards to the agent. Given a score function $s(x',\sigma) = \nabla_{x'} e_{\theta}(x', \sigma) = \sfrac{(x' - x)}{\sigma^2}$, for any fixed sample $x'$, $\nabla_{x'} e_{\theta}(x', \sigma_{k+1}) > \nabla_{x'} e_{\theta}(x', \sigma_{k})$. This means that at any given point in training, the increase in received rewards for making positive progress is much higher for the $(k+1)^\text{th}$ energy function. This can greatly incentivise the agent to move closer to $p_D$.

\subsubsection{Why Do Some Tasks Substantially Benefit From Annealing?}

Ablation experiments from \Cref{section:ablations} show that annealing has a significant positive impact on more challenging tasks like crane pose. We hypothesise that the expert data distribution $p_D$ is more densely distributed for some tasks while sparsely for others (\Cref{fig:dense_sparse_pd}). Assuming that a newly initialised policy always starts in the same manifold, the agent would start at a more informative reward function for a sparsely distributed $p_D$ than for a densely distributed $p_D$. This means that a noise level change for a densely distributed $p_D$ provides more informative rewards. 

 \begin{figure}[t!]
    \centering
    \makebox[\textwidth][c]{
        \begin{minipage}{0.5\textwidth}
            \centering
            \subfloat{\includegraphics[width=\textwidth]{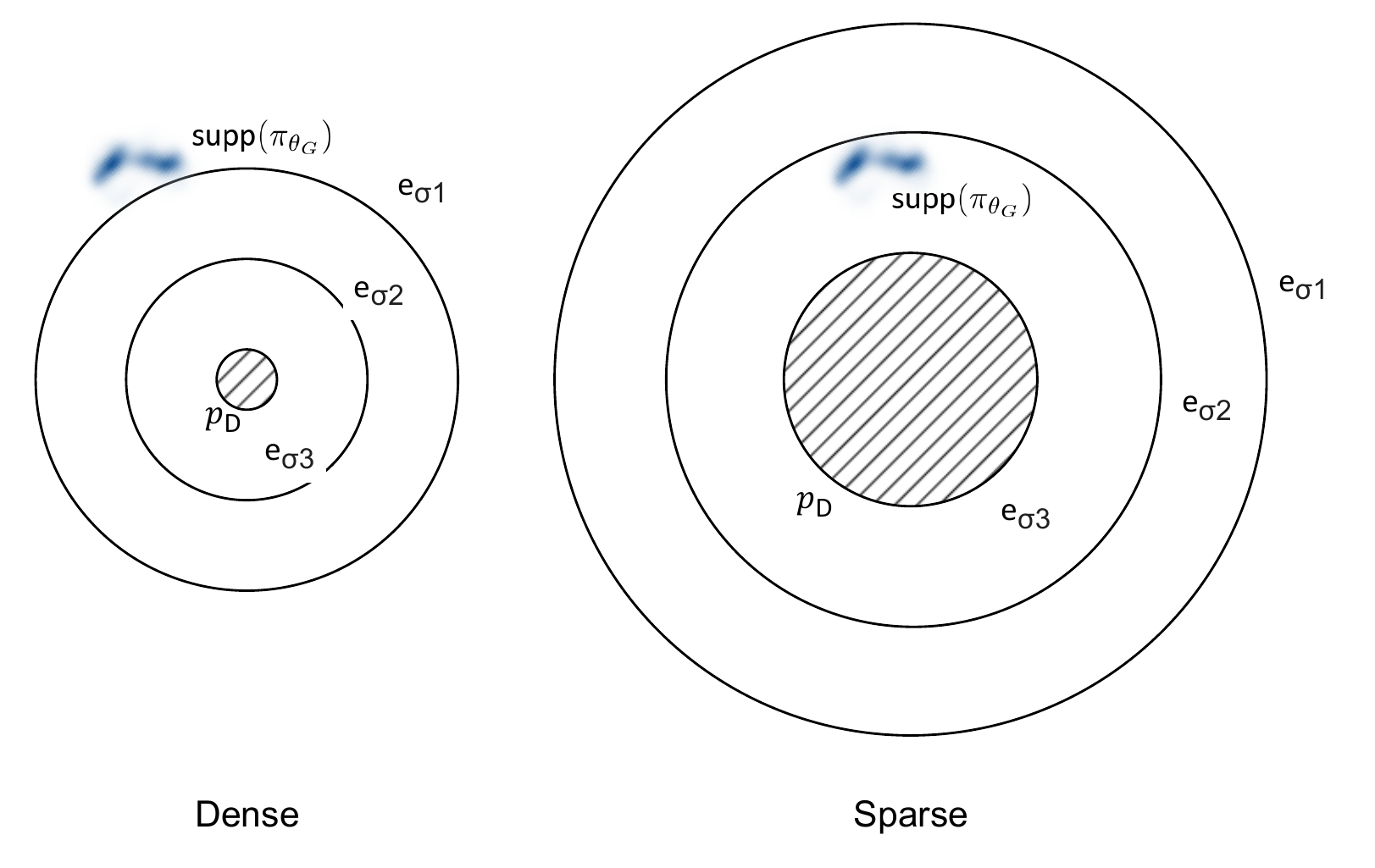}}
        \end{minipage}
    }
    \caption{An illustration of a dense and sparse $p_D$. In the case of the sparse distribution, annealing would have a lower impact as the newly initialised policy would start out receiving much higher rewards.}
    \label{fig:dense_sparse_pd}
\end{figure}

\section{Experiment Details}

\subsection{Tasks}
\label{appendix:tasks}

The task reward and goal features for each imitation task are described below.

\textbf{Target Reaching}

In this task, the agent's objective is to navigate towards a randomly placed target. The agent's state is augmented to include a goal $g_t = x^{*}_t$ where t is the current timestep, and $x^{*}_t$ is the target's position in the agent's local coordinate frame. During training, the target is randomly initialised in a $240 ^{\circ}$ arc around the agent within a radius in the range [2.0, 6.0] meters. The agent is rewarded for minimizing its positional error norm and heading error to the target. Here, $x_t$ is the agent's position, $v_t$ is the agent's velocity vector, $d{^*}_t$ is a unit vector pointing from the agent's root to the target, and $v^*$ is a scalar desired velocity set to 2.0 for walking and 4.0 for runninng. 

\begin{align}
    r^{task} &= 0.6 \left(\exp{(-0.5 \norm{x^{*}_t - x_t}^2)}\right) + 0.3 \left(1 - \frac{2}{1 + \exp{(5*\frac{(v_t * d{^*}_t)}{\norm{v_t}})}} \right) + 0.1 \left(1 - (\norm{v_t} - v^*)^2 \right) \label{r_far}
\end{align}

\textbf{Target Reaching \& Punching}

In this task, the agent's objective is to both reach a target and then strike it with a left-handed punch. Here, the goal is a vector of the target's position in the agent's local frame and a boolean variable indicating whether the target has been punched, $g_t = < x^{*}_t, \texttt{punch state} >$. We use the same target initialisation strategy as before with an arc of $45 ^{\circ}$ and an arc radius in [1.0, 5.0] meters. The agent is rewarded using the target location reward when it is farther than a threshold distance from the target and with a target striking reward when it is within this threshold. The striking reward aims to minimise the pose error and heading error between the agent's end effector and the target while simultaneously aiming to achieve a certain end effector velocity and height. The complete reward function is shown below where $x^{eff}_t$ is the end effector position, $v^{eff}_t$ is the end effector velocity vector, $h^{eff}_t$ is the end effector height, and $v^{* eff} = 4.0$ and $h^{* eff} = 1.4$ are scalar desired punch speed and height.

\begin{equation}
    r^{task}=
        \begin{cases}
            1.0 & \text{target has been hit} \\
            r^{near} & \norm{x_t - x^{*}_t} < 1.2 \\
            r^{far} & \text{otherwise}
        \end{cases}
\end{equation}

\begin{align}
    r^{far} &= \text{\Cref{r_far}} \nonumber \\
    r^{near} &= 0.3 + 0.3 \left( 0.1 \left( \exp{\left(-2.0 \norm{x^{*}_t - x^{eff}_t}^2 \right)} \right) \right)  + 0.4 \left(1 - \frac{2}{1 + \exp{ \left(5*\frac{(v^{eff}_t * d{^*}_t)}{\norm{v^{eff}_t}} \right) }} \right) \nonumber \\
             & +  0.3 \left(1 - \left( \norm{v^{eff}_t} - v^{* eff} \right)^2 \right) + 0.2 \left(1 - (h^{eff}_t - h^{* eff})^2 \right)
\end{align}

\textbf{Unconditioned Rewards}

These reward functions were used in our ablation experiments. In this case, we do not use any additional goal-conditioning.

\qquad \emph{Walking \& Running:} The agent is rewarded positively for every time step in the episode, encouraging longer episode lengths. Further, the agent is also rewarded for relative positive displacement, with the clipping at 0.5 meters. The final task reward is a weighted combination of both. $r_t^{task} = 0.5 \cdot 1 + 0.5 \cdot \frac{\texttt{clip}(x_t - x_{t-1}, 0.0, 0.5)}{0.5}$.

\qquad \emph{Crane Pose \& Punching:} The agent is rewarded positively for every time step in the episode, encouraging longer episode lengths. $r_t^{task} = 1$.




\subsection{Training \& Evaluation Details}
\label{appendix:training_details}

\subsubsection{Architectures}
For both NEAR and AMP, the policy is a simple feed-forward neural network that maps the agent's state $s$ to a Gaussian distribution over actions, $\pi_{\theta_G} = \mathcal{N}(\mu(s), \Sigma)$ with the mean $\mu(s)$ being returned by the neural network and a fixed diagonal covariance matrix $\Sigma$. In our experiments, the neural network is a fully-connected network with (1024, 512) neurons and ReLU activations. $\Sigma$ is set to have values of $e^{-2.9}$ and stays fixed throughout training. The critic (value function) is also modelled by a similar network. The value function is updated with TD($\lambda$) \citep{sutton1988learning} and advantages are computed using generalised advantage estimation \citep{schulman2015high}. When using the environment-supplied task reward, we set $w^{task} = w^{energy} = 0.5$.

The NCSN neural network is a fully-connected network with an auto-encoder style architecture. Here, the encoder has (512, 1024) neurons and maps the input to a 2048-dimensional latent space. The decoder has (1024, 512, 128) neurons with the output being the unconditional energy of a sample. We use ELU activations between all layers of the auto-encoder and use Xavier uniform weight initialisation \citep{glorot2010understanding} to improve consistency across different independent training runs. Further, we standardise samples before passing them to the network. The NCSN noise scale was defined as a geometric sequence with $\sigma_1 = 20$, $\sigma_L = 0.01$, and $L=50$ following the advice from \cite{song2020improved}. Following \cite{song2020improved} we also track the exponentially moving average (EMA) of the weights of the energy network during training and use the EMA weight during inference, as it has been shown to further reduce the instability in the sample quality. All models in this paper were trained on the Nvidia-A100 GPU \citep{nvidia_a100}.

\subsubsection{Reinforcement Learning}
\label{appendix:rl_details}
We borrow the experimental setup from \cite{peng2021amp} where the agent's state is a 105-dimensional vector consisting of the relative position of each link with respect to the root body and the rotation of each link (represented as a 6-dimensional normal-tangent vector of the link's linear and angular velocities). All features are in the agent's local coordinate system. Similarly to \cite{peng2021amp}, we do not add additional features to encode information like the feature's phase in the motion, or the target pose. Further, the character is not trained to replicate the phase-wise features of the motion and the learnt rewards are generally only a representation of the closeness of the agent's motion to the expert's data distribution. The agent's actions specify a positional target that is then tracked via PD controllers at each joint.

We use asynchronous parallel training in the IsaacGym simulator \citep{makoviychuk2021isaac} for all experiments and analyses in this paper and spawn 4096 independent, parallel environments during both training and evaluation. Given an initial policy, training is carried out by first rolling out the policy in all environments for a rollout horizon (16 in our experiments). During rollouts, an environment is reset if it happens to reach the \texttt{done} state. A replay buffer is then populated with all agents' transitions obtained from the rollouts. Then, with rollouts paused, the policy and value function are updated with the data from the replay buffer. After the update, the rollouts are restarted with the updated policy. We use multiple mini epochs to update the policy and value function at every update step. In each mini epoch, several mini-batches of data samples are drawn from the replay buffer. The number of mini-batches depends on the relative size of each mini-batch and the replay buffer.

During policy evaluation, the same rollout procedure is used, however this time, the rollout horizon is set to 300 and the networks are of course not updated. Performance metrics are recorded every main training epoch as a mean across the $k=20$ most rewarding environments. Finally, we use reference state initialisation to initialise all environments at a random state in the expert's motion dataset and use early termination to reset when the agent falls over. For certain tasks like spin-kick, we find that it is especially challenging to learn a policy starting from certain initial states (such as the jumping-off point). Additional exploration is required to learn the optimal actions starting from these states. In these cases, reference states are drawn from a beta distribution instead of a uniform distribution (with $\beta = 3.0$ and $\alpha = 1.0$). For temporally composed tasks like target reaching and punching, we use both reference motions during initialisation. In this case, both reference motions are used with a probability of 0.5 and the target is initialised in the range [1.0, 1.2] when the agent is initialised with the punching reference. 

\subsubsection{Evaluation Metrics}
\qquad \emph{\textbf{Average Dynamic Time Warping Pose Error:}} This is the mean dynamic time warping (DTW) error \citep{sakoe1978dynamic} between trajectories of the agent's and the expert's poses averaged across all expert motions in the dataset. Given a set of $j$ expert motion trajectories $\hat{\tau}_j$ of arbitrary length $L_j$ where each trajectory is a series containing the Cartesian positions of the reference character's joints $\hat{x}_i$, $D = \{ \hat{\tau}_j \}_{j=1}^{N_{traj}}$ where $\hat{\tau}_j = \{ \hat{x}_i \}_{i=1}^{L_j}$, first, we roll out the trained policy deterministically \footnote{Deterministic here means that we use the predicted mean as the action instead of sampling from $\mathcal{N}(\mu(x), \Sigma)$.} across several thousand random starting-pose initialisations \footnote{This is done to ensure that the computed performance is not biased to any single initial state.}. Then, the $k=20$ most rewarding trajectories are selected to form a set of policy trajectories $D_{\pi} = \{ \tau_m \}_{m=1}^{k}$ where $\tau_m = \{ x_i \}_{i=1}^{L_m}$ and each trajectory has an arbitrary length $L_m$. The average dynamic time warping pose error is then computed as the average DTW score of all $\tau_m$ across all expert trajectories $\hat{\tau}_j$ with $\norm{\hat{x}_i - x_i }_2$ as the cost function. To ensure that the pose error is only in terms of the character's local pose and not its global position in the world, we transform each Cartesian position to be relative to the character's root body position at that timestep ($\hat{x}_i \gets \hat{x}_i - \hat{x}_{i}^{root}$ and $x_i \gets x_i - x_{i}^{root}$).

\qquad \emph{\textbf{Spectral Arc Length:}} Spectral Arc Length (SAL) \citep{beck2018sparc, balasubramanian2011robust, balasubramanian2015analysis} is a measure of the smoothness of a motion. The smoothness of the character's trajectory is an interesting metric to determine the policy's ability to perform periodic motions in a controlled manner. The underlying idea behind SAL is that smoother motions typically change slowly over time and are comprised of fewer and low-valued frequency domain components. In contrast, jerkier motions have a more complex frequency domain signature that consists of a lot of high-frequency components. The length of the frequency domain signature of a motion is hence an appropriate indication of a motion's smoothness (with low values indicating smoother motions). SAL is computed by adding up the lengths of discrete segments (arcs) of the normalised frequency-domain map of a motion. In our experiments, we use SPARC \cite{beck2018sparc}, a more robust version of the spectral arc length that is invariant to the temporal scaling of the motion. We track the average SAL of the $k=20$ most rewarding trajectories generated by the policy at different training intervals and use the root body Cartesian coordinates to compute the SAL. Note that in this case, we do not transform the positions to the agent's local coordinate system.

\subsubsection{Hyperparameters \& Training Iterations}
\label{appendix:hyperparams}

\begin{table}[H]
\centering
\caption{Hyperparameters used in our experiments. Architectural details are mentioned in \Cref{appendix:training_details}}
\label{tab:hyperparams}
\resizebox{0.38\textwidth}{!}{%
\begin{tabular}{@{}ll@{}}
\toprule
\textbf{Hyperparam}                      & \textbf{Value} \\ \midrule
\textit{\textbf{Reinforcement Learning}} &                \\
Discount Factor $\gamma$                 & 0.99           \\
GAE $\lambda$                            & 0.95           \\
TD $\lambda$                             & 0.95           \\
Learning Rate                            & 5 e-5          \\
PPO clip threshold                       & 0.2            \\
Training horizon                         & 16             \\
Mini-batch size                          & 2048           \\
                                         &                \\
\textit{\textbf{NEAR (Energy NCSN)}}     &                \\
Batch size                               & 128            \\
$\sigma_1$                               & 20             \\
$\sigma_L$                               & 0.01           \\
Num noise levels L                       & 50             \\
Exponentially moving avg. rate           & 0.999          \\
Learning rate                            & 1 e-5          \\
Adam - $\beta$                           & 0.9            \\
                                         &                \\
\textit{\textbf{AMP (Discriminator)}}    &                \\
Batch size                               & 512            \\
Gradient penalty                         & 5              \\
Demo observations buffer size            & 2 e5         \\
Discriminator loss coefficient           & 5              \\
Discriminator output regularisation      & 0.05          
\end{tabular}%
}
\end{table}

\begin{table}[H]
\centering
\caption{Details of NCSN training iterations (NEAR only) and reinforcement learning environment interactions (same for NEAR \& AMP). $^\dagger$: clips include turning motions. $^\ddagger$: clips include turning and punching motions.}
\label{tab:training_iterations}
\resizebox{0.7\textwidth}{!}{%
\begin{tabular}{@{}llll@{}}
\toprule
\textbf{Task}                        & \textbf{Num.\ Motion Clips}                           & \textbf{NCSN Iters.}\ & \textbf{RL Env.\ Interactions} \\ \midrule
Walking                     & 74                                          & 1.5 e5                               & 60 e6                                              \\
Running                     & 26                                          & 1.5 e5                               & 60 e6                                              \\
Crane Pose                  & 3                                           & 1.0 e5                               & 60 e6                                              \\
Left Punch                  & 19                                          & 1.2 e5                               & 80 e6                                              \\
Mummy Walk                  & 1                                           & 0.8 e5                               & 80 e6                                              \\
Spin Kick                   & 1                                           & 1.2 e5                               & 100 e6                                             \\
                            &                                             &                                      &                                                    \\
Target Reaching (walking)   & $22^{\dagger}$              & 1.5 e5                               & 100 e6                                             \\
Target Reaching (running)   & $8^{\dagger}$               & 1.2 e5                               & 100 e6                                             \\
Target Reaching \& Punching & $33^{\ddagger}$ & 1.2 e5                               & 100 e6                                            
\end{tabular}%
}
\end{table}

\subsubsection{Repeatability \& Determinism}
\label{appendix:reproducibility}
Each algorithm was trained 5 times independently on every task with separate random number generator seeds for each run. However, using a fixed seed value will only potentially allow for deterministic behaviour in the IsaacGym simulator. Due to GPU work scheduling, it is possible that runtime changes to simulation parameters can alter the order in which operations take place, as environment updates can happen while the GPU is doing other work. Because of the nature of floating point numeric storage, any alteration of execution ordering can cause small changes in the least significant bits of output data, leading to divergent execution over the simulation of thousands of environments and simulation frames. This means that experiments from the IsaacGym simulator (including the original work on AMP) are not perfectly reproducible on a different system. However, parallel simulation is a major factor in achieving the results in this paper and minor non-determinism between independent runs is hence just an unfortunate limitation. More information on this can be found in the \href{https://github.com/isaac-sim/IsaacGymEnvs/blob/main/docs/reproducibility.md}{IsaacGymEnvs benchmarks package}. Note that this is only a characteristic of the reinforcement learning side of our algorithm. The pretrained energy functions are also seeded and these training runs are perfectly reproducible.

\subsection{Maze Domain Details}
\label{appendix:maze_domain}

This section provides additional details of the experiments and procedures used to generate \Cref{fig:illustration}. In this experiment, the agent is initialised randomly in a small window at the top portion of the L-shaped maze. The agent aims to reach the goal position at the bottom right (the episode ends when the agent's position is within some threshold of the goal). Expert demonstrations were collected, so the expert's trajectory did not reach the target directly but first passed through an L-shaped maze. The agent is expected to learn to imitate this by passing through the maze. We train AMP and NEAR in this domain and visualise the learnt reward functions. In the case of NEAR, we visualise the energy function ($e_{\theta}(\cdot, \sigma)$ with $\sigma = 20.0$) and in the case of AMP we visualise the discriminator. The energy function is trained by training our modified NCSN on the expert state transitions in the maze domain. The discriminator is trained while training the AMP policy. \Cref{fig:illustration} shows two comparisons. We compare $\texttt{rew}(s'|s)$ at a fixed state $s$ in the maze at different training iterations. The energy-based reward function is stationary throughout training and $\texttt{rew}(s'|s)$ only changes with $s'$ for a fixed $s$. In contrast, the adversarial reward depends on the agent's policy and keeps changing to minimise the prediction for the samples in the policy's distribution $p_G$. 

There is no environment-provided reward function in this domain and the visualisations are obtained only by using the learnt rewards. Apart from visualisation and domain-related differences, the training regime for both NEAR and AMP in this task is identical to the training regime used in all other experiments in this paper. Please refer to \Cref{appendix:training_details} for training details.

\section{Extended Results}
\label{appendix:results}

\begin{table}[H]
\centering
\caption{A comparison of the mean performance of NEAR and AMP at the end of training (\textit{Avg. pose error:} lower is better. \textit{Others:} closer to expert is better). Stdev.\ across independent runs is shown as an error value (\scriptsize{$\pm$}).}
\label{tab:near_amp_compare_full}
\resizebox{\textwidth}{!}{%
\begin{tabular}{@{}llllll@{}}
\toprule
\textbf{Task} & \textbf{Algorithm} & \textbf{Avg. Pose Error (m)} & \textbf{Spectral Arc Length}  & \textbf{Root Body Velocity ($\frac{m}{s}$)} & \textbf{Root Body Jerk ($\frac{m}{s^3}$)} \\ \midrule
Walking       & NEAR               & \textbf{\errval{0.51}{0.15}} & \textbf{\errval{-7.52}{1.32}} & \textbf{\errval{1.25}{0.18}}                & \textbf{\errval{360.89}{184.36}}          \\
              & AMP                & \textbf{\errval{0.51}{0.07}} & \errval{-8.78}{1.04}          & \errval{1.87}{0.1}                          & \errval{736.32}{78.25}                    \\
              & Expert     & -                            & -5.4                          & 1.31                                        & 130.11                                    \\
              &                    &                              &                               &                                             &                                           \\
Running       & NEAR               & \textbf{\errval{0.62}{0.17}} & \textbf{\errval{-7.24}{1.59}} & \textbf{\errval{3.52}{0.37}}                & \textbf{\errval{1298.42}{215.42}}         \\
              & AMP                & \errval{0.65}{0.01}          & \errval{-9.71}{1.54}          & \errval{3.79}{0.14}                         & \errval{1560.14}{87.18}                   \\
              & Expert     & -                            & -3.79                         & 3.55                                        & 513.68                                    \\
              &                    &                              &                               &                                             &                                           \\
Crane Pose    & NEAR               & \errval{0.94}{0.15}          & \errval{-6.6}{1.97}           & \errval{0.12}{0.22}                         & \textbf{\errval{46.77}{77.72}}            \\
              & AMP                & \textbf{\errval{0.82}{0.09}} & \textbf{\errval{-8.1}{1.18}}  & \textbf{\errval{0.03}{0.01}}                & \errval{19.29}{5.17}                      \\
              & Expert     & -                            & -12.28                        & 0.03                                        & 49.05                                     \\
              &                    &                              &                               &                                             &                                           \\
Left Punch    & NEAR               & \errval{0.37}{0.05}          & \textbf{\errval{-6.87}{1.47}} & \errval{0.01}{0}                            & \errval{11.34}{2.86}                      \\
              & AMP                & \textbf{\errval{0.32}{0.01}} & \errval{-9.93}{3.28}          & \textbf{\errval{0.06}{0.02}}                & \textbf{\errval{29.6}{8.16}}              \\
              & Expert     & -                            & -1.73                         & 0.16                                        & 72.49                                     \\
              &                    &                              &                               &                                             &                                           \\
Mummy Walk    & NEAR               & \errval{0.66}{0.39}          & \textbf{\errval{-4.72}{1.2}}  & \errval{0.33}{0.4}                          & \textbf{\errval{189.73}{189.41}}          \\
              & AMP                & \textbf{\errval{0.41}{0.01}} & \errval{-13.84}{1.12}         & \textbf{\errval{0.98}{0.04}}                & \errval{354.49}{33.02}                    \\
              & Expert     & -                            & -4.71                         & 0.73                                        & 79.63                                     \\
              &                    &                              &                               &                                             &                                           \\
Spin Kick     & NEAR               & \errval{0.78}{0.05}          & \errval{-5.59}{2.26}          & \textbf{\errval{0.53}{0.19}}                & \errval{286.63}{60.77}                    \\
              & AMP                & \textbf{\errval{0.58}{0.1}}  & \textbf{\errval{-3.16}{0.73}} & \errval{0.5}{0.14}                          & \textbf{\errval{278.25}{29.52}}           \\
              & Expert     & -                            & -3.39                         & 1.05                                        & 273.61                                   
\end{tabular}%
}
\end{table}

\begin{table}[H]
\centering
\caption{A comparison of wall-clock times of NEAR and AMP. Stdev. across independent runs is shown as an error value ($\pm$). We find that overall NEAR requires slightly more training time than AMP. Across all tasks, NCSN contributes to less than 30\% of the total computational time of NEAR, with RL accounting for the majority of the computation load and wall-clock time. Interestingly, AMP always has a larger reinforcement learning computational load. This is expected as AMP learns both the policy and the reward function simultaneously.}
\label{tab:wall_times}
\resizebox{\textwidth}{!}{%
\begin{tabular}{@{}lllll@{}}
\toprule
\textbf{Task} & \textbf{Algorithm} & \textbf{Avg. NCSN Wall Time (min.)} & \textbf{Avg. RL Wall Time (min.)} & \textbf{Total Wall Time (min.) \footnotemark} \\ \midrule
Walking & NEAR & \errval{10.19}{0.22} & \textbf{\errval{15.07}{0.23}} & \errval{25.26}{0.32} \\
 & AMP & - & \errval{18.46}{0.52} & \textbf{\errval{18.46}{0.52}} \\
 &  &  &  &  \\
Running & NEAR & \errval{8.11}{0.3} & \textbf{\errval{15.15}{0.29}} & \errval{23.26}{0.42} \\
 & AMP & - & \errval{17.97}{0.42} & \textbf{\errval{17.97}{0.42}} \\
 &  &  &  &  \\
Crane Pose & NEAR & \errval{6.61}{0.16} & \textbf{\errval{13.92}{0.6}} & \errval{20.53}{0.62} \\
 & AMP & - & \errval{17.16}{0.5} & \textbf{\errval{17.16}{0.5}} \\
 &  &  &  &  \\
Left Punch & NEAR & \errval{7.83}{0.17} & \textbf{\errval{20.39}{0.42}} & \errval{28.22}{0.45} \\
 & AMP & - & \errval{22.42}{0.61} & \textbf{\errval{22.42}{0.61}} \\
 &  &  &  &  \\
Mummy Walk & NEAR & \errval{5.07}{0.1} & \textbf{\errval{18.11}{0.32}} & \errval{23.18}{0.34} \\
 & AMP & - & \errval{21.13}{0.33} & \textbf{\errval{21.13}{0.33}} \\
 &  &  &  &  \\
Spin Kick & NEAR & \errval{7.73}{0.2} & \textbf{\errval{19.94}{0.46}} & \errval{27.67}{0.5} \\
 & AMP & - & \errval{24.47}{0.42} & \textbf{\errval{24.47}{0.42}} \\
 &  &  &  &  \\
Target Reaching (walking) & NEAR & \errval{10.52}{0.2} & \textbf{\errval{28.93}{0.65}} & \errval{39.45}{0.68} \\
 & AMP & - & \errval{36.65}{0.22} & \textbf{\errval{36.65}{0.22}} \\
 &  &  &  &  \\
Target Reaching (running) & NEAR & \errval{8.08}{0.14} & \textbf{\errval{28.71}{0.86}} & \errval{36.79}{0.87} \\
 & AMP & - & \errval{34.95}{1.39} & \textbf{\errval{34.95}{1.39}} \\
 &  &  &  &  \\
Target Reaching \& Punching & NEAR & \errval{7.93}{0.09} & \textbf{\errval{28.68}{0.83}} & \textbf{\errval{36.61}{0.83}} \\
 & AMP & - & \errval{37.54}{0.28} & \errval{37.54}{0.28}
\end{tabular}%
}
\end{table}

\footnotetext{For NEAR, the total wall time is calculated by assuming that the wall times of NCSN and RL runs are normally distributed. Total wall time = NCSN avg.\ $+$ RL avg.\ \scriptsize{$\pm$} $\sqrt{\text{NCSN std.}^2 + \text{RL std.}^2}$ }


\section{Extended Ablations}
\label{appendix:ablations}

\begin{table}[H]
\centering
\caption{A comparison of ablated configurations of NEAR (\textit{Avg. pose error:} lower is better. \textit{Others:} closer to expert is better). Stdev.\ across independent runs is shown as an error value (\scriptsize{$\pm$}).}
\label{tab:ablation_full}
\resizebox{\textwidth}{!}{%
\begin{tabular}{@{}llllll@{}}
\toprule
\textbf{Task} & \textbf{Config} & \textbf{Avg. Pose Error (m)} & \textbf{Spectral Arc Length} & \textbf{Root Body Velocity ($\frac{m}{s}$)} & \textbf{Root Body Jerk ($\frac{m}{s^3}$)} \\ \midrule
Walking       & anneal \& $e_{\theta}$         & \errval{0.51}{0.15}          & \errval{-7.52}{1.32}         & \errval{1.25}{0.18}                         & \errval{360.89}{184.36}                   \\
              & anneal \& $\tilde{r}$          & \errval{0.42}{0.02}          & \errval{-6.11}{2.01}         & \errval{0.88}{0.58}                         & \errval{249.97}{136.56}                   \\
              & $\sigma_5$ \& $e_{\theta}$     & \errval{0.49}{0.22}          & \errval{-7.1}{1.94}          & \errval{1.58}{0.38}                         & \errval{653.17}{554.85}                   \\
              & $\sigma_5$ \& $\tilde{r}$      & \errval{0.42}{0.02}          & \errval{-8.7}{1.2}           & \errval{1.25}{0.51}                         & \errval{360.86}{154.68}                   \\
              & Expert                 & -                            & -5.4                         & 1.31                                        & 130.11                                    \\
              &                                &                              &                              &                                             &                                           \\
Running       & anneal \& $e_{\theta}$         & \errval{0.62}{0.17}          & \errval{-7.24}{1.59}         & \errval{3.52}{0.37}                         & \errval{1298.42}{215.42}                  \\
              & anneal \& $\tilde{r}$          & \errval{0.59}{0.03}          & \errval{-8.02}{0.57}         & \errval{4.86}{0.16}                         & \errval{1875.44}{72.8}                    \\
              & $\sigma_5$ \& $e_{\theta}$     & \errval{0.62}{0.18}          & \errval{-6.69}{1.85}         & \errval{3.48}{0.52}                         & \errval{1334.1}{244.91}                   \\
              & $\sigma_5$ \& $\tilde{r}$      & \errval{0.57}{0.04}          & \errval{-8.61}{0.86}         & \errval{4.76}{0.1}                          & \errval{1826.98}{57.54}                   \\
              & Expert                 & -                            & -3.79                        & 3.55                                        & 513.68                                    \\
              &                                &                              &                              &                                             &                                           \\
Crane Pose    & anneal \& $e_{\theta}$         & \errval{0.94}{0.15}          & \errval{-6.6}{1.97}          & \errval{0.12}{0.22}                         & \errval{46.77}{77.72}                     \\
              & anneal \& $\tilde{r}$          & \errval{1.33}{0.21}          & \errval{-7.24}{2.51}         & \errval{0.14}{0.23}                         & \errval{72.24}{109.83}                    \\
              & $\sigma_5$ \& $e_{\theta}$     & \errval{1.38}{0.8}           & \errval{-6.03}{1.79}         & \errval{0.07}{0.1}                          & \errval{29.5}{40.22}                      \\
              & $\sigma_5$ \& $\tilde{r}$      & \errval{1.23}{0.07}          & \errval{-4.34}{1.02}         & \errval{0.02}{0.01}                         & \errval{16.3}{4.65}                       \\
              & Expert                 & -                            & -12.28                       & 0.03                                        & 49.05                                    
\end{tabular}%
}
\end{table}

\section{Adversarial IL Challenges \& AMP Discriminator Experiments}
\label{appendix:ail_failure_modes}

In this section, we elaborate on the challenges of adversarial imitation learning and provide additional empirical results demonstrating instability and non-smoothness in the AMP discriminator. As briefly highlighted in \Cref{section:ail_background}, the root causes for the challenges of adversarial techniques are the simultaneous min-max optimisation in their training procedure and the formulation of the discriminator as a classifier. Throughout training, the policy is updated to bring $p_G$ closer to $p_D$, meaning that the support of $p_G$ -- the manifold of the samples generated by the policy -- keeps changing. Simultaneously, the discriminator's decision boundary is also constantly changing to distinguish between the samples in $\supp(p_D)$ and $\supp(p_G)$. This iteratively changing adversarial procedure leads to high-variance discriminator predictions and causes performance instability.


At any point in training, the discriminator is trained to discriminate $p_D$ from $p_G$ -- and \emph{not} $p_D$ from all that is not $p_D$ -- meaning that it is quite accurate on samples in $\supp(p_D)$ and $\supp(p_G)$ but is often arbitrarily defined in other regions of the sample space \citep{arjovsky2017towards}. The changing nature of $\supp(p_G)$ means that after a policy update, some of the samples passed on to the discriminator could potentially have come from a region outside these two supports. Since the discriminator is arbitrarily defined here, it is likely to return misleading predictions, leading to misleading policy updates. This variance is potentially further heightened by the stochastic nature of reinforcement learning techniques like Proximal Policy Optimisation (PPO) \citep{schulman2017proximal}, meaning that the agent's exploration is typically met with poor rewards \footnote{We agree that the KL diverge constraint in PPO might somewhat reduce the negative impacts of this, however, penalising policy change with worse rewards is still non-ideal.}. We empirically verify this high-variance hypothesis through experiments on AMP (\Cref{appendix:ail_experiments_disc_variance}).

Further, reward smoothness is indeed an important criterion for faster convergence and sensible policy improvements in RL methods like PPO. An ideal data-driven reward function is both smooth in the sample space as well as consistent throughout training (stationarity). Unfortunately, because the discriminator $D_{\theta_D}$ is non-smooth in the whole space $X$ and is arbitrarily defined in $(\supp(p_D) \cup \supp(p_G))^\mathsf{c}$ -- parts of the sample space that are unexplored and not in the demonstration dataset $\mathcal{M}$ --, the rewards in this region are also non-smooth. Moreover, the iteratively changing nature of the discriminator's decision boundary means that the reward function is also non-stationary. These issues compound such that the agent often receives constant or arbitrarily changing rewards and hence the policy receives uninformative updates. \Cref{appendix:ail_experiments_disc_nonsmooth} discusses experiments that highlight this non-smoothness and non-stationarity in AMP. 

Finally, adversarial learning techniques are also prone to poor performance due to perfect discrimination. \cite{arjovsky2017towards} introduce the perfect discriminator theorems that state that if $p_D$ and $p_G$ have disjoint supports or have supports that lie in low-dimensional manifolds (lower than the dimension of the sample space $X$), then there exists an optimal discriminator $D^*: X \rightarrow [0,1]$ that has accuracy 1 and $\nabla_x D^*(x) = 0 \forall x \in \supp(p_D) \cup \supp(p_G)$ (Theorems 2.1 and 2.2 in \citep{arjovsky2017towards}). They also prove that under these conditions $p_G$ is non-continuous in $X$ and it is increasingly unlikely that $\supp(p_D)$ and $\supp(p_G)$ perfectly align (Lemmas 2 and 3 in \citep{arjovsky2017towards}). In the case of AIL, the generator is indeed a neural network mapping low-dimensional samples (features $s \in S$) to the discriminator's high dimensional input space.  Further, $p_G$ and $p_D$ are potentially disjoint and it is at least unlikely that their supports perfectly align. This means that at the initial stages of training, the discriminator very quickly learns to perfectly distinguish between the samples in the expert dataset $\mathcal{M}$ and those in $p_G$, assigning a prediction of 0 to any sample in the agent's trajectory. When used as a reward function, $\log D_{\theta_D}(W(\pi_{\theta_G}(s))) = \log 0$, instantly leads to arbitrary policy updates. Even when using a modified reward formulation, say $D()$ instead of $\log D()$, the agent would receive a nearly constant reward, say $c$. Under such a constant reward function, the gradient of the performance measure quickly goes down to zero -- since the expectation of the gradient of the log probability of a parameterised distribution (Fisher score) is zero at the parameter \citep{wasserman2013all}. We verify these claims by replicating the experiments from \cite{arjovsky2017towards} on adversarial motion priors (AMP) (\Cref{appendix:ail_experiments_perfect_disc}).

To conclude this section we highlight that NEAR does not rely on such simultaneous optimisation and instead has a fixed, stationary reward function. Hence, NEAR is not prone to such instability and non-smoothness. 

\subsection{AMP Discriminator Experiments}

\subsubsection{High Discriminator Variance}
\label{appendix:ail_experiments_disc_variance}

We conduct additional experiments to very the high variance in the adversarial motion priors \citep{peng2021amp} discriminator predictions. We train AMP on a humanoid walking task using the loss function from \Cref{disc_loss} for the discriminator and Proximal Policy Optimisation (PPO) \citep{schulman2017proximal} to train the policy. The discriminator is slightly modified by removing the sigmoid activation at the output layer and instead computing the loss on $\texttt{sigmoid}(D())$ \footnote{This is done to allow the network to predict any arbitrary value and to allow more flexibility in the reward function transformation. The same is done in the original AMP procedure \cite{peng2021amp} and we make no additional modifications to their code.} (same setup as the main experiments in this paper). Training is continued normally until some cut-off point. The cut-off point is varied across runs to obtain varying levels of intersection between $\supp(p_D)$ and $\supp(p_G)$. Then, with the policy updates paused, we continue training the discriminator to maintain the learnt decision boundary and visualise the variance in the trained discriminator's predictions on the motions generated by an unchanging policy. We hypothesise that as training continues and the supports of the two distributions get closer, the discriminator is less likely to see samples from a region outside $\supp(p_D) \cup \supp(p_G)$, meaning that its variance reduces as training progresses. We observe that the discriminator's predictions indeed have quite a high variance and the range of the predictions varies vastly across training levels (\Cref{fig:disc_variance}). Further, the variance indeed reduces over the training level, indicating a gradually increasing intersection between $\supp(p_D)$ and $\supp(p_G)$. The adversarial optimisation is likely to get stabilised as the policy gets closer to optimality, however, training for the most part is still rather unstable because of the high variance in the discriminator's predictions. 

\begin{figure}[t!]
    \centering
    \makebox[\textwidth][c]{
        \begin{minipage}{0.4\textwidth}
            \centering
             \subfloat{\includegraphics[width=\textwidth]{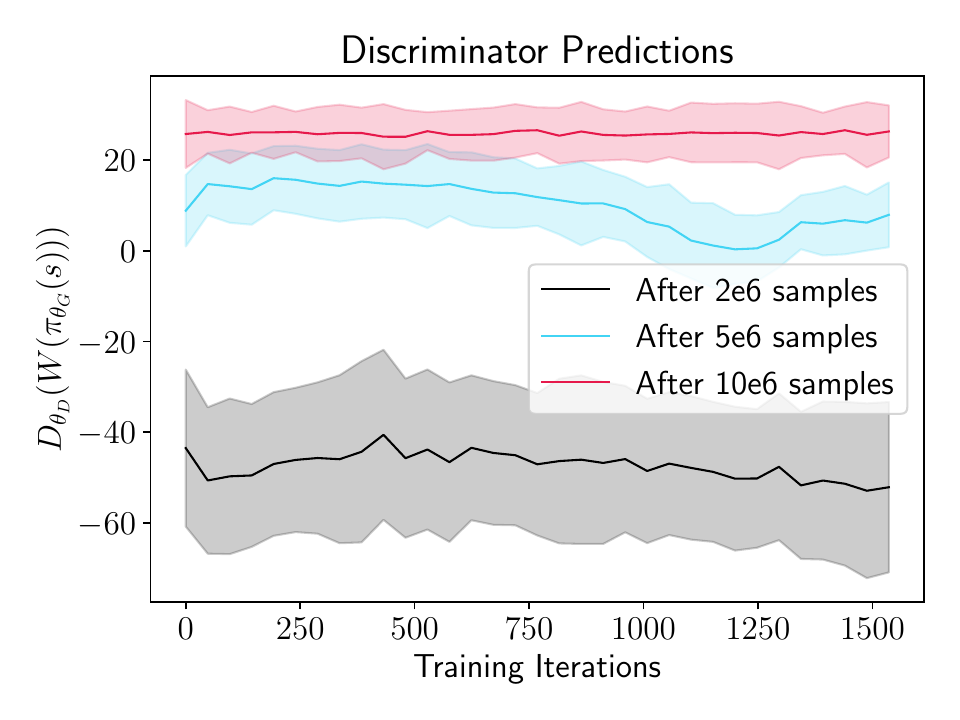}}
        \end{minipage}
        \begin{minipage}{0.4\textwidth}
            \centering
            \subfloat{\includegraphics[width=\textwidth]{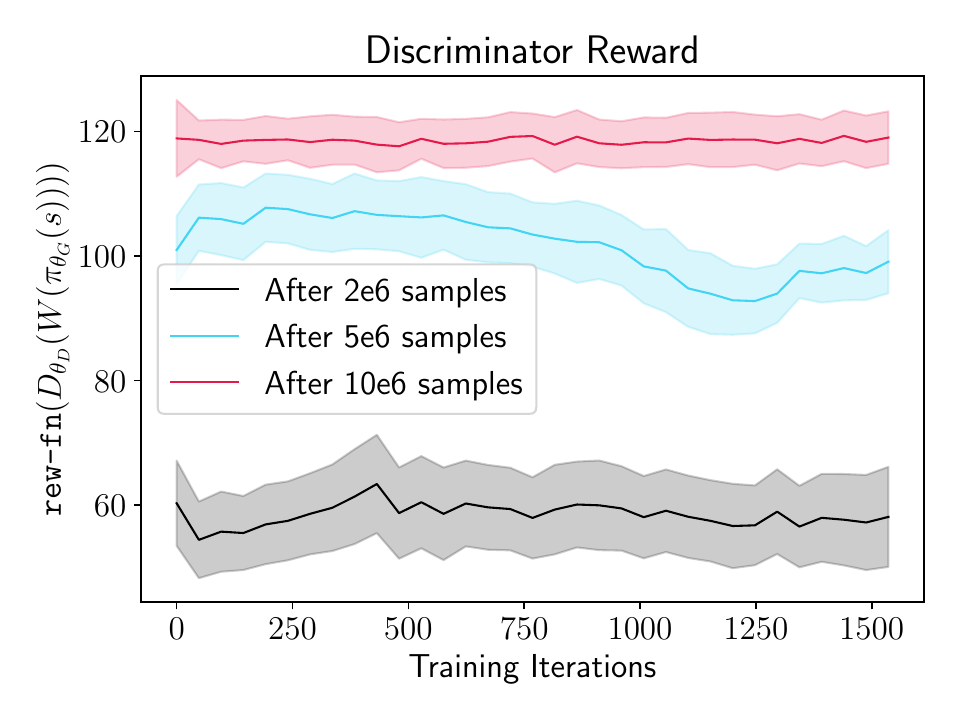}}
        \end{minipage}
    }
    \caption{Discriminator variance experiments. The policy was first trained for $2e6$, $5e6$, and $10e6$ data samples. Then, with the policy updates paused, discriminator training was continued. We plot the rewards received across several independent policy rollouts and observe a high variance. Given that the policy is unchanging, a high variance in the reward indicates poor reinforcement learning.}
    \label{fig:disc_variance}
\end{figure}

\subsubsection{Discriminator Non-Smoothness}
\label{appendix:ail_experiments_disc_nonsmooth}

\begin{figure}[t!]
    \makebox[\textwidth][c]{ 
        \begin{minipage}{0.4\textwidth}
            \centering
            \subfloat{
                \includegraphics[width=\linewidth]{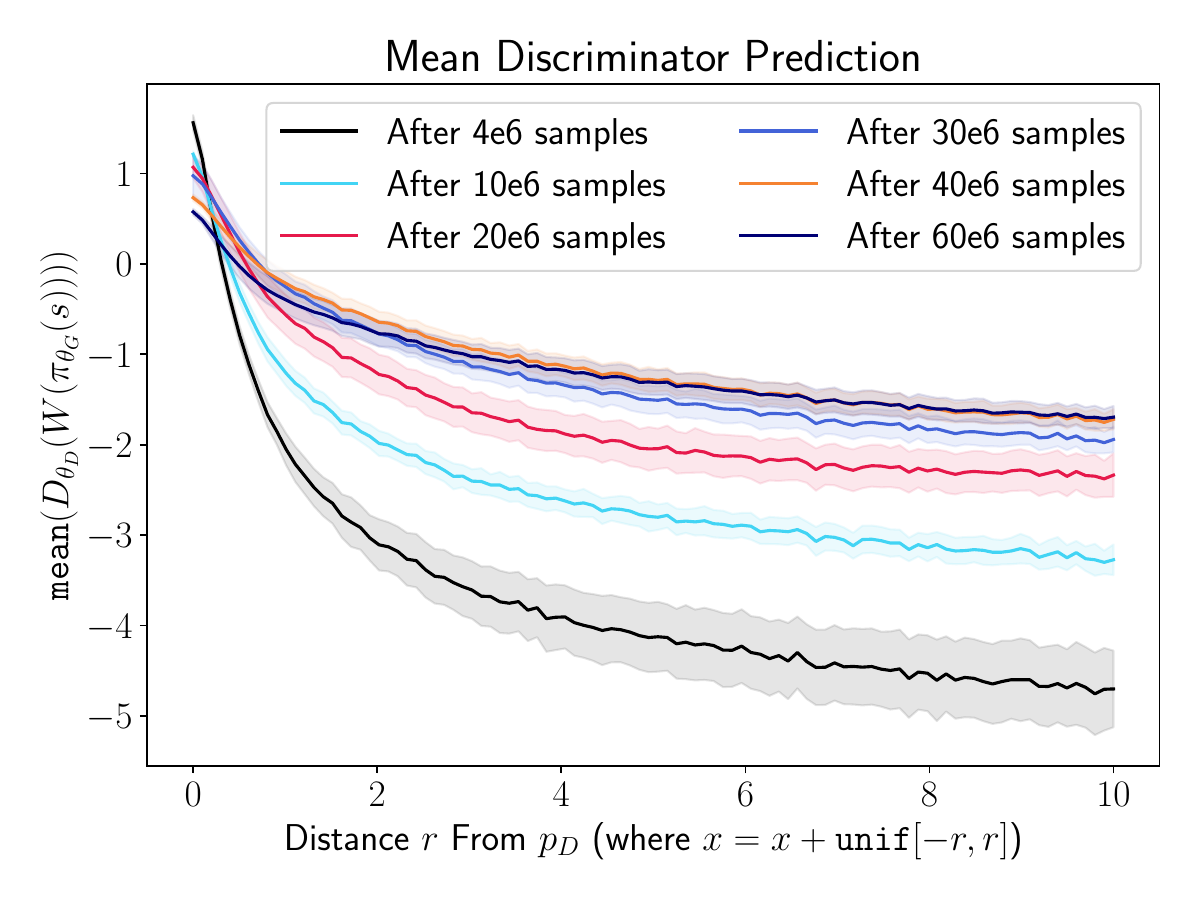}
            }
        \end{minipage}%
        \begin{minipage}{0.4\textwidth}
            \centering
            \subfloat{
                \includegraphics[width=\linewidth]{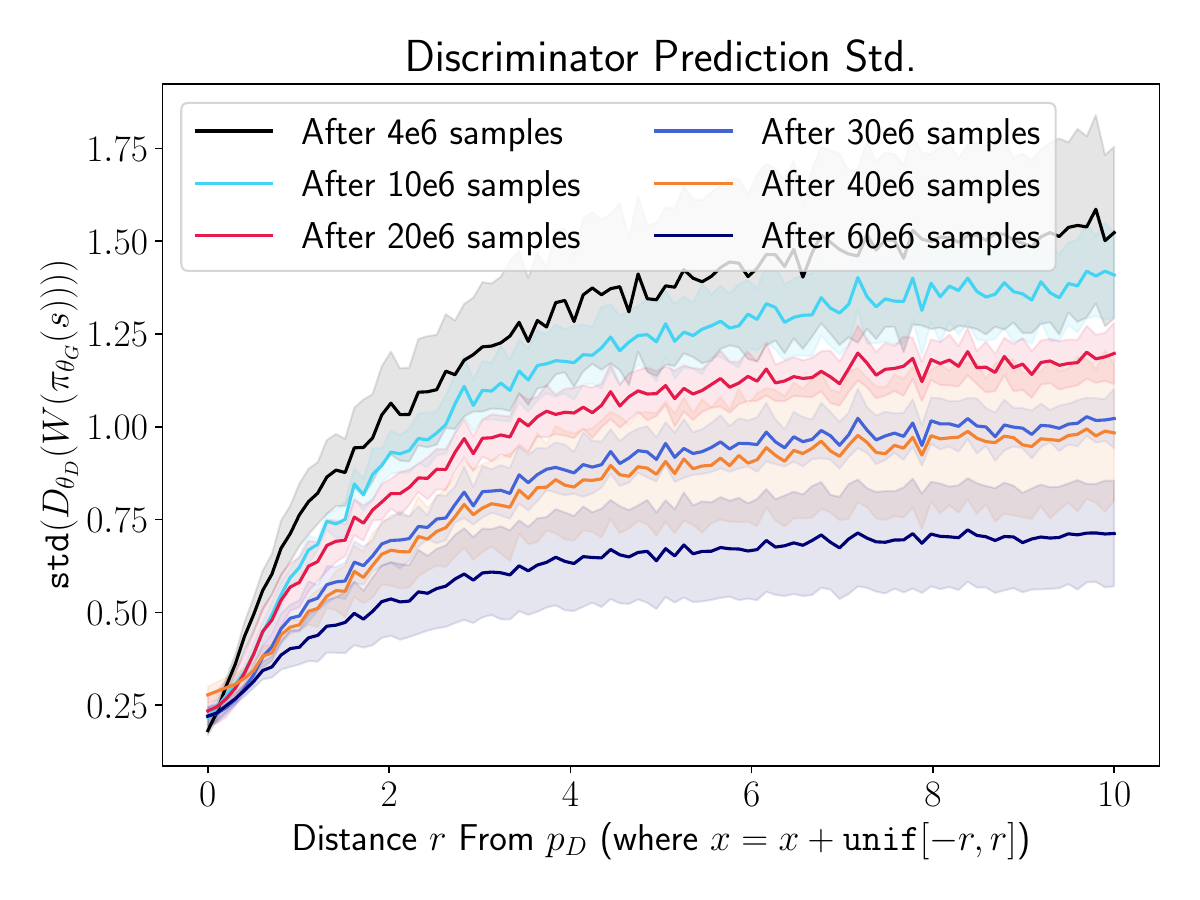}
            }
        \end{minipage}
    }
    \caption{Discriminator non-smoothness experiments. Plots show the mean and std. discriminator prediction over a large batch of perturbed expert data samples at varying levels of perturbation (where a distance of 0 on the x-axis corresponds to unperturbed expert data in $p_D$). Notice the high value of the std. compared to the mean at any given distance from $p_D$.}
    \label{fig:disc_pred_on_pd_walk}
\end{figure}

We also conduct experiments to understand the smoothness of the learnt reward function and its changes over training iterations (\Cref{fig:disc_pred_on_pd_walk}). To do this, we again train AMP on a humanoid walking task. This time we do not modify the algorithm and simply evaluate the discriminator at gradually increasing distances from the true data manifold at various points in training. We find that the discriminator's predictions on average, decline as we move farther away from the true data manifold. However, again, the predictions are quite noisy and have a fairly large standard deviation.

\subsubsection{Perfect Discrimination}
\label{appendix:ail_experiments_perfect_disc}

Finally, we replicate the experiments from \cite{arjovsky2017towards} on adversarial IL (\Cref{fig:perfect_disc}). We use the same experimental setup as \Cref{appendix:ail_experiments_disc_variance} but now compute the accuracy with the output of the final Sigmoid layer. Here, instead of continuing the discriminator's training, we retrain the discriminator to distinguish between samples in the expert dataset $p_D$ and samples in $p_G$ (same procedure as \cite{arjovsky2017towards}).  Our experiments reproduce the results from \cite{arjovsky2017towards} on adversarial IL (\Cref{fig:perfect_disc}). The discriminator loss from \Cref{disc_loss} rapidly declines indicating a near-perfect discriminator prediction and highlighting the fact that even after sufficient training, $p_G$ and $p_D$ are non-continuous. The accuracy of the discriminator reaches a value of 1.0 in at most 75 iterations and $\nabla_x D(x)$ rapidly declines to be 0, further corroborating the theoretical results. Finally, we also find that the discriminator's predictions on the motions generated by the policy rapidly drop down to zero, meaning that the policy receives unhelpful updates. 

Despite these core issues of AIL, it is still unclear why these techniques (and traditional GANs) still function comparably to score-based alternatives. One reason could be that the changing discriminator inputs mitigate the challenges of perfect discrimination. However, this is still an open question that needs future work and deeper analysis.

\begin{figure}[t!]
    \makebox[\textwidth][c]{ 
        \begin{minipage}{0.4\textwidth}
            \centering
            \subfloat{
                \includegraphics[width=\linewidth]{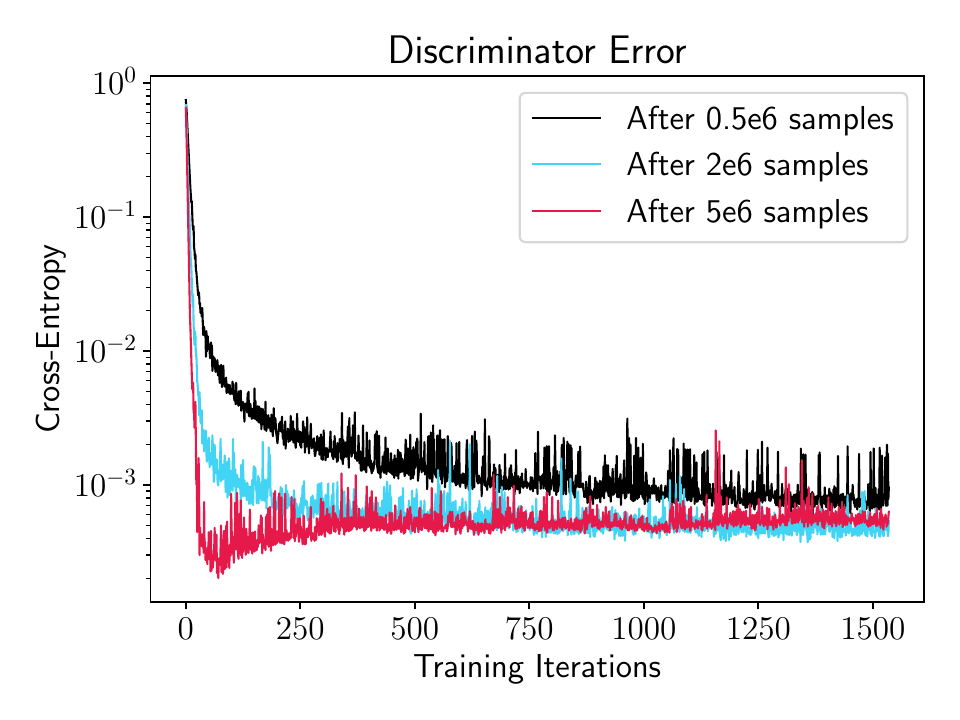}
            }
        \end{minipage}%
        \begin{minipage}{0.4\textwidth}
            \centering
            \subfloat{
                \includegraphics[width=\linewidth]{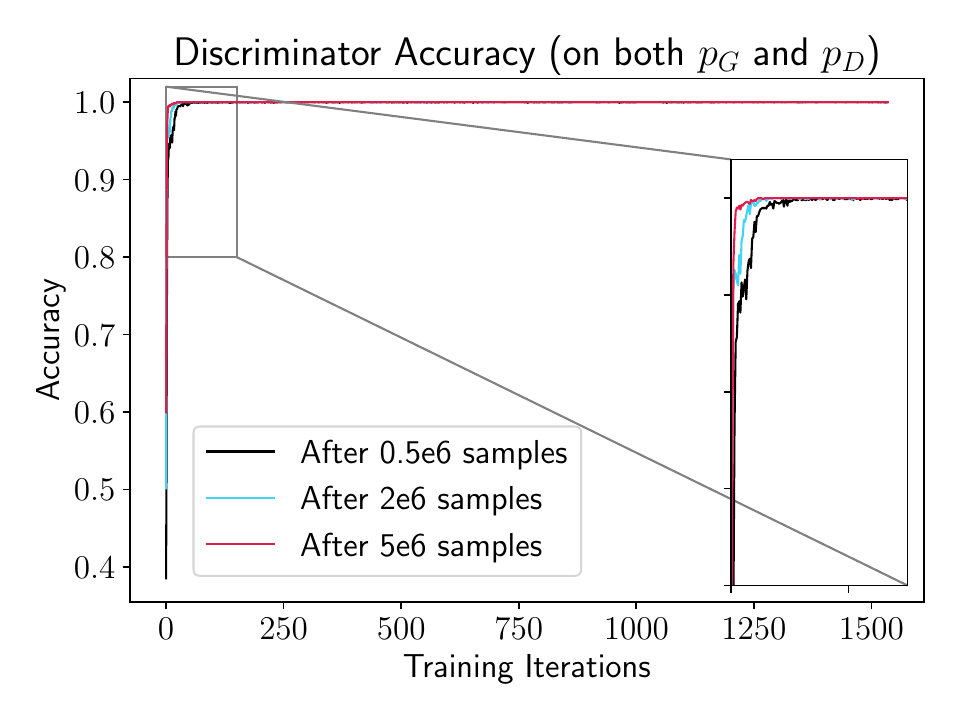}
            }
        \end{minipage}
    }
    \makebox[\textwidth][c]{ 
        \begin{minipage}{0.4\textwidth}
            \centering
            \subfloat{
                \includegraphics[width=\linewidth]{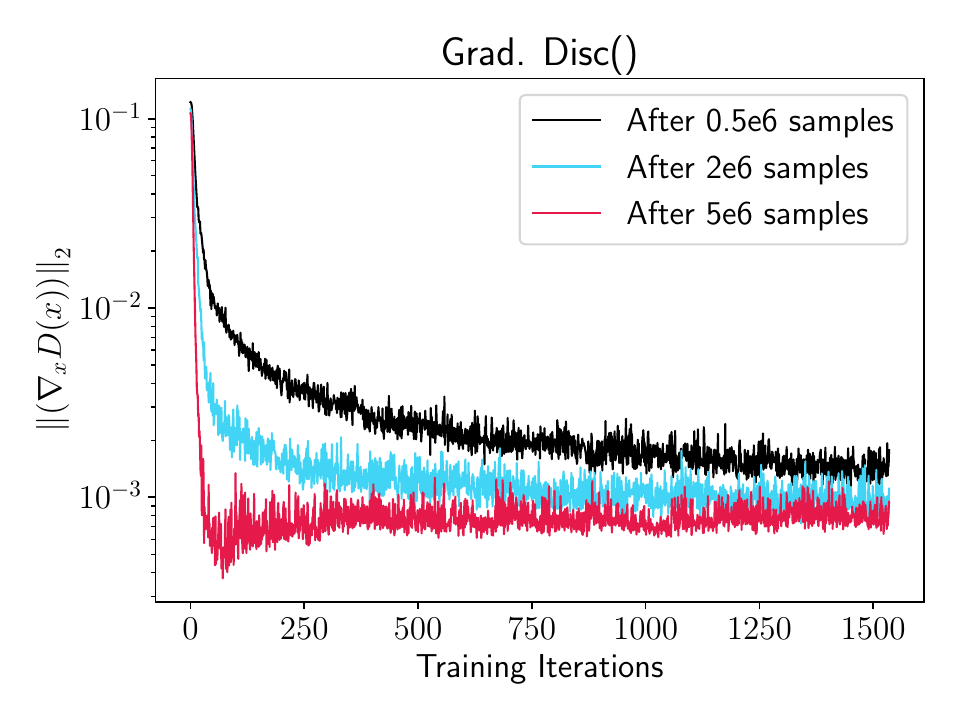}
            }
        \end{minipage}%
        \begin{minipage}{0.4\textwidth}
            \centering
            \subfloat{
                \includegraphics[width=\linewidth]{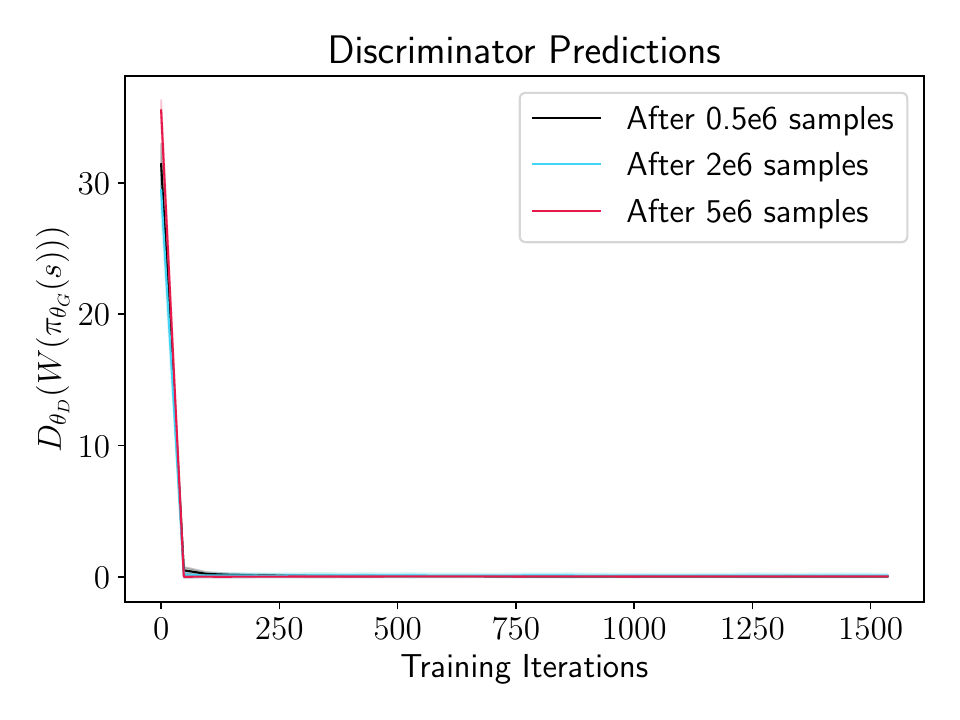}
            }
        \end{minipage}
    }
    \caption{Perfect discriminator experiments on the walking task (multi-clip dataset with 74 motions clips). The policy was first trained for $0.5e6$, $2e6$, and $5e6$ data samples. Then, with the policy updates paused, the discriminator was retrained. We find that the discriminator very quickly learns to perfectly distinguish between $p_D$ and $p_G$ (notice the logarithmic scale).}
    \label{fig:perfect_disc}
\end{figure}

\end{document}